\renewcommand{\mathbf}[1]{\bm{#1}}
\newtheorem{thm}{Theorem}
\newcommand{\myparatight}[1]{\smallskip\noindent{\bf {#1}:}~}
\newcommand{\alg}{SelfishAttack\xspace}
\algnewcommand\algorithmicforpara{\textbf{for}}
\algnewcommand\algorithmicdoinparallel{\textbf{do in parallel}}
\title{Competitive Advantage Attacks to Decentralized Federated Learning}
\author{%
Yuqi Jia\textsuperscript{1} \quad
        Minghong Fang\textsuperscript{2} \quad
        Neil Gong\textsuperscript{1}
    \\
  \textsuperscript{1}Duke University, \quad \textsuperscript{2}University of Louisville\\
  \textsuperscript{1}\texttt{\{yuqi.jia, neil.gong\}@duke.edu}, \quad \textsuperscript{2}\texttt{minghong.fang@louisville.edu} \\
}
\begin{document}

\maketitle

\begin{abstract}
Decentralized federated learning (DFL) enables clients (e.g., hospitals and banks) to jointly train machine learning models without a central orchestration server. In each global training round, each client trains a local model on its own training data and then they exchange local models for aggregation. In this work, we propose \alg{}, a new family of attacks to DFL. In \alg{}, a set of selfish clients aim to achieve \emph{competitive advantages} over the remaining non-selfish ones, i.e.,  the final learnt local models of the selfish clients are more accurate than those of the non-selfish ones. Towards this goal, the selfish clients send carefully crafted local models to each remaining non-selfish one in each global training round.  We formulate finding such local models as an optimization problem and propose methods to solve it when DFL uses different aggregation rules. Theoretically, we show that our methods find the optimal solutions to the optimization problem. Empirically, we show that \alg{} successfully increases the accuracy gap (i.e., competitive advantage) between the final learnt local models of selfish clients and those of non-selfish ones. Moreover, \alg{} achieves larger accuracy gaps than poisoning attacks when extended to increase competitive advantages. Our code and data are available at: \url{https://github.com/jyqhahah/SelfishAttack}.

\end{abstract}

\section{Introduction} \label{sec:intro}
In \emph{decentralized federated learning (DFL)}~\cite{dai2022dispfl,fang2024byzantine,guo2021byzantine,yang2019byrdie,fang2019bridge,yang2020adversary,wu2022byzantine,yar2023backdoor}, a group of clients (e.g., hospitals or banks) collaboratively train machine learning models without relying on a central server. Each client maintains a \emph{local model} (called \emph{pre-aggregation local model}), trains it using private data, exchanges a \emph{shared model} with others, and aggregates shared models from all clients  as a new local model (called \emph{post-aggregation local model}) through an \emph{aggregation rule}, e.g., FedAvg~\cite{McMahan17}. In non-adversarial settings, all clients converge to identical models after each round. Compared to centralized federated learning (CFL), which requires a trusted coordinator, DFL offers enhanced robustness by eliminating single points of failure and is better suited to scenarios, e.g., the clients are hospitals or banks, where clients cannot agree on a central authority.

This paper reveals a new vulnerability in DFL: its decentralized nature opens the door to a unique class of insider attacks, which we call \emph{\alg{}}. In this attack, a subset of \emph{selfish clients} collude to compromise the training process, aiming to: 1). improve their final local models beyond what they could achieve by training among themselves (called \textbf{utility goal}), and 2). outperform the remaining \emph{non-selfish clients} (called \textbf{competitive-advantage goal}).

Unlike conventional poisoning attacks~\cite{blanchard2017machine,cao2022mpaf,fang2020local,xie2025model,fang2021data}, which assume external adversaries that compromise or inject malicious clients to cause indiscriminate misclassification, \alg{} is an internal, collusion-based threat. The selfish clients participate fully in the DFL process but selectively manipulate only the models shared with non-selfish participants. Their goal is not to poison models outright, but to widen the performance gap in their favor—while still benefiting from collaboration. This makes \alg{} more subtle and targeted than traditional attacks.

For instance, in medical DFL applications such as Alzheimer’s diagnosis from 3D brain MRI~\cite{roy2019braintorrent,yagis20203d}, several hospitals may collaborate to train diagnostic models. However, a few self-interested centers could covertly apply \alg{} to gain diagnostic advantages, improving their models by leveraging external updates while degrading those of non-collaborators—all without disrupting the overall training. This scenario underscores the risks of collusion in decentralized systems, where no central authority exists to detect or prevent such subtle manipulations.

To formalize the attack, we cast \alg{} as an optimization problem balancing two loss terms corresponding to its dual objectives. The key idea is to craft the shared models sent to non-selfish clients such that: 1). their post-aggregation local models remain close to their pre-aggregation models (ensuring they continue to contribute useful updates), while 2). being strategically perturbed to degrade their model performance relative to the selfish group. For aggregation rules such as FedAvg, Median, and Trimmed-mean, we derive the closed-form expressions for the optimal shared models. For more complex rules (e.g., Krum~\cite{blanchard2017machine}, FLTrust~\cite{cao2020fltrust}, FLDetector~\cite{zhang2022fldetector}, RFA~\cite{pillutla2022robust}, FLAME~\cite{nguyen2022flame}), we adapt these expressions to construct effective attacks.

We evaluate \alg{} on three diverse datasets, e.g., CIFAR-10, FEMNIST, and Sent140, and compare it against poisoning-based baselines adapted to our setting. Our experiments show that \alg{} can simultaneously achieve the utility and competitive-advantage goals. For example, on CIFAR-10 with 30\% selfish clients, \alg{} improves their final model accuracy by at least 11\% over that of the non-selfish clients. Moreover, it consistently outperforms poisoning-based baselines in both absolute model performance and accuracy gap. Our contributions can be summarized as follows: 

\begin{itemize}
    \item We propose \alg{}, a new family of  attacks for DFL where selfish clients seek competitive advantage and maintain local model utility.

    \item We theoretically derive optimal shared models of selish clients that balance these goals under three aggregation rules, including FedAvg, Median, and Trimmed-mean.

    \item Experiments on three benchmark datasets show that \alg{} effectively achieves both attack goals and outperforms existing poisoning attacks.
\end{itemize}

\section{Preliminaries and Related Work}
\vspace{-2mm}
\subsection{Decentralized Federated Learning}
In a Decentralized Federated Learning (DFL) system, $N$ clients collaboratively train a model without a central server. Each client $h$ has its own dataset $\mathcal{D}_h$ and maintains a local model. The training process proceeds in \emph{global training rounds}, where each round includes three steps:

\myparatight{Step I. Local Training} Each client trains its local model using local data and obtains a \emph{pre-aggregation local model} $\hat{\mathbf{w}}_h$.

\myparatight{Step II. Model Sharing}: Client $h$ exchanges \textit{shared models} $\mathbf{w}_{(h,h')}$ to each other client $h'$. In honest settings, shared models are identical to pre-aggregation local models.

\myparatight{Step III. Model Aggregation}: Each client aggregates received shared models using an \textit{aggregation rule} to update its \textit{post-aggregation local model} $\mathbf{w}_h$.

The process repeats for multiple rounds until convergence. In non-adversarial settings, all clients typically converge to the same local model. See Appendix~\ref{appendix:dfl} for more details.

\subsection{Poisoning Attacks in FL}\label{sec:poisoning}
\vspace{-2mm}
Poisoning attacks in FL aim to degrade the performance of global or local models and can be broadly categorized into untargeted and targeted attacks. Existing works such as Gaussian attack~\cite{blanchard2017machine} and Trim attack~\cite{fang2020local} show that adversaries can manipulate shared models to corrupt learning. We provide further background and details on poisoning attack strategies in Appendix~\ref{appendix:attacks}.

\subsection{Robust Aggregation Rules}
\vspace{-2mm}
To defend against adversarial behaviors, several Byzantine-robust aggregation rules have been proposed, including Median~\cite{yin2018byzantine}, Trimmed-mean~\cite{yin2018byzantine}, and Krum~\cite{blanchard2017machine}, which aim to limit the influence of outlier models. 
More advanced defenses~\cite{cao2020fltrust,pillutla2022robust,fang2022aflguard,nguyen2022flame,fang2025provably,fang2024byzantine,cao2022flcert,xie2024fedredefense,jia2024unlocking,zhang2022fldetector,fang2025we} like FLTrust~\cite{cao2020fltrust}, FLDetector~\cite{zhang2022fldetector}, RFA~\cite{pillutla2022robust}, and FLAME~\cite{nguyen2022flame} use additional mechanisms such as trust scoring or clustering. We summarize these defenses in Appendix~\ref{appendix:agg_rules}.
\section{Threat Model}\label{sec:threat_model}
\vspace{-2mm}
\myparatight{Attacker's goal}
We consider a colluding subset of clients, termed \emph{selfish clients}, as the attacker. Their goal is to compromise the DFL training to achieve two objectives: 1). The \emph{utility goal}: selfish clients aim to obtain more accurate local models than those learned by training solely among themselves. Formally, let $M_1$ be the local model obtained when selfish clients train solely among themselves, and $M_2$ the model learned when they participate in full DFL. The utility goal requires that $M_2$ is more accurate than $M_1$. 2). The \emph{competitive-advantage goal}: the selfish clients aim to obtain more accurate local models than those of the non-selfish clients, gaining practical benefits such as improved service quality or increased revenue in competitive domains.

\myparatight{Attacker's capabilities}
A selfish client sends its true pre-aggregation local model to other selfish clients but can arbitrarily manipulate the models shared with non-selfish clients. In contrast, non-selfish clients follow standard DFL behavior and share their pre-aggregation local models with all peers. We assume the standard resilience condition $N \ge 3m + 1$~\cite{lamport2019byzantine}, where $m$ is the number of selfish clients and $N$ is the total number of clients. Fig.\ref{fig:overview} illustrates the attack mechanism.

\myparatight{Attacker's knowledge}
As legitimate participants, selfish clients know the aggregation rule and receive the pre-aggregation local models from all non-selfish clients. They also share their own pre-aggregation local models among themselves.

\myparatight{Comparison with poisoning attacks}
Unlike traditional poisoning attacks~\cite{blanchard2017machine,cao2022mpaf,fang2020local}, which typically involve external adversaries injecting faulty updates to indiscriminately degrade model performance, \alg{} is an internal, coordinated attack. Poisoning attacks often reduce the performance of all involved clients, including compromised ones, yielding no competitive advantage. In contrast, \alg{} carefully manipulates only selected updates to degrade non-selfish clients' models while maintaining or improving the accuracy of selfish clients’ own models. As shown in our experiments, compared to poisoning attacks, \alg{} effectively degrades non-selfish clients' performance under robust aggregation, while also giving selfish clients a competitive advantage.

\section{Our \alg{}}\label{section:method}
\begin{wrapfigure}{r}{0.32\textwidth}
  \vspace{-12mm}
  \centering
  \includegraphics[width=0.3\textwidth]{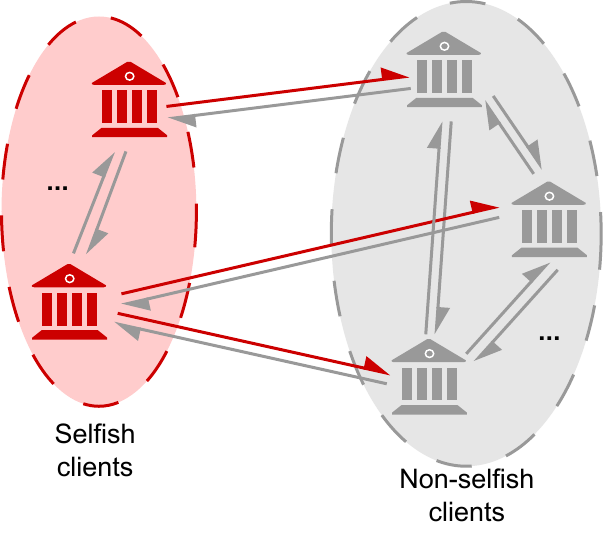}
  \caption{An illustration of \alg{}. Non-selfish clients share their pre-aggregation models with all; selfish clients send tailored models to non-selfish clients to manipulate their updates.}
  \label{fig:overview}
\end{wrapfigure}
\vspace{-2mm}
\subsection{Overview}
\vspace{-2mm}
\alg{} is formulated as an optimization problem minimizing a weighted sum of two losses that capture the utility and competitive-advantage goals. The utility loss ensures that non-selfish clients’ local models remain close to their pre-aggregation versions, preserving valuable training information. The competitive-advantage loss encourages divergence from the aggregation that would occur without selfish clients, degrading non-selfish clients' models. As the objective is non-differentiable w.r.t. shared models, we instead derive the optimal post-aggregation local model and then construct corresponding shared models that achieve it under aggregation rules such as FedAvg, Median, and Trimmed-mean. If other aggregation rules are used, we apply shared models optimized for these three as a general strategy. Finally, we propose a criterion for deciding when to start attack.  Algorithm~\ref{alg:main_alg} in Appendix shows the complete algorithm.

\subsection{Formulating an Optimization Problem}
\vspace{-2mm}
We assume $n$ non-selfish and $m$ selfish clients in the DFL system, denoted $0,\dots,n-1$ and $n,\dots,n+m-1$, respectively. In each round, selfish clients craft shared models to manipulate the aggregation of a target non-selfish client $i$. Let $\hat{\mathbf{w}}_i$ and $\mathbf{w}_i$ denote the pre- and post-aggregation local models of $i$, and $\tilde{\mathbf{w}}_i$ its model had only non-selfish clients participated. Table~\ref{tab:notation} in Appendix summarizes our important notations. 

To achieve the \textbf{utility goal}, selfish clients need to preserve the learning benefits from non-selfish clients. Therefore, they can keep the non-selfish client $i$'s post-aggregation model $\mathbf{w}_i$ close to its original local model $\hat{\mathbf{w}}_i$. We quantify this as the squared Euclidean distance $\Vert \mathbf{w}_i - \hat{\mathbf{w}}_i \Vert^2$. To achieve the \textbf{competitve-advantage goal}, selfish clients should reduce the accuracy of non-selfish clients, so they can drive the non-selfish client $i$'s post-aggregation model away from the model $\tilde{\mathbf{w}}_i$ it would get without any selfish participants. This is captured by $-\Vert \mathbf{w}_i - \tilde{\mathbf{w}}_i \Vert^2$. Finally, the objective is:
\begin{align}
\label{equation:optim}
    \min_{\{\mathbf{w}_{(j,i)}\}_{n\leq j\leq n+m-1}}\Vert {\mathbf{w}}_i- \hat{\mathbf{w}}_i \Vert^2 - \lambda \Vert {\mathbf{w}}_i -\tilde{\mathbf{w}}_i \Vert^2,
\end{align}
where ${\mathbf{w}}_i = \text{Agg}(\{\mathbf{w}_{(h,i)}\}_{0\leq h\leq n+m-1})$, $\tilde{\mathbf{w}}_i  = \text{Agg}(\{\mathbf{w}_{(h,i)}\}_{0\leq h\leq n-1})$, $\lambda\geq 0$ is a hyperparameter to achieve a trade-off between the two attack goals, and the variables of the optimization problem are shared models $\{\mathbf{w}_{(j,i)}\}_{n\leq j\leq n+m-1}$ sent from selfish clients to non-selfish client $i$.

\subsection{Solving the Optimization Problem}
\vspace{-2mm}
Since the objective function is not differentiable with respect to shared models $\mathbf{w}_{(j,i)}$ for many aggregation rules, we reparameterize the optimization over the post-aggregation model $\mathbf{w}_i$ and solve it dimension-wise. For each coordinate $k$, the problem becomes a bounded quadratic program according to $\mathbf{w}_{i}[k]$, where $[k]$ indicates the $k$th dimension:
\begin{align}
\label{equation:lambdaless1}
 \min_{\underline{\mathbf{w}}_{i}[k]\leq {\mathbf{w}}_{i}[k]\leq \overline{\mathbf{w}}_{i}[k]} (1-\lambda) {\mathbf{w}}_i[k]^2
    -2(\hat{\mathbf{w}}_i[k]-\lambda\tilde{\mathbf{w}}_i[k]){\mathbf{w}}_i[k] + (\hat{\mathbf{w}}_i[k] ^2-\lambda \tilde{\mathbf{w}}_i[k]^2),  
\end{align}
where $\underline{\mathbf{w}}_i[k]$ and $\overline{\mathbf{w}}_i[k]$ denote the lower and upper bounds of ${\mathbf{w}}_i[k]$, which will be derived for each aggregation rule. The unconstrained minimizer is $\mathbf{p}_i[k] = \frac{\hat{\mathbf{w}}_i[k] - \lambda \tilde{\mathbf{w}}_i[k]}{1-\lambda}$, and the optimal solution ${\mathbf{w}}_i^{*}[k]$ is either $\mathbf{p}_i[k]$ or clipped to the interval bounds. Formally, we have
\begin{equation}
\label{equation:thm1_optim}
{\mathbf{w}}_{i}^{*}[k]=\left\{
\begin{array}{rl}
\mathbf{p}_i[k], & \lambda<1 \land \mathbf{p}_i[k]\in [\underline{\mathbf{w}}_{i}[k], \overline{\mathbf{w}}_{i}[k]]\\
\underline{\mathbf{w}}_{i}[k] \text{ or } \overline{\mathbf{w}}_{i}[k], & \text{otherwise}\\
\end{array}
\right..
\end{equation}
The details and proof can be found in Appendix~\ref{appendix:proof_optim}. Then we introduce how to craft selfish clients' shared models they send to each non-selfish client $i$ to achieve the optimal solution ${\mathbf{w}}_i^{*}[k]$ when using FedAvg, Median, or Trimmed-mean as the aggregation rule.

\subsection{Attacking FedAvg}
\vspace{-2mm}
\myparatight{Bound derivation} FedAvg is not robust, so the selfish clients can make $\mathbf{w}_i[k]$ arbitrary. To avoid being easily detected, we define upper and lower bounds as $\overline{\mathbf{w}}_i[k] = \max_{0\leq h\leq n-1} \mathbf{w}_{(h,i)}[k]$ and $\underline{\mathbf{w}}_i[k] = \min_{0\leq h\leq n} \mathbf{w}_{(h,i)}[k]$.

\myparatight{Shared model construction} After computing the optimal target ${\mathbf{w}}_i^{*}[k]$ via Equation~\ref{equation:thm1_optim}, we set $m-1$ selfish clients’ values to ${\mathbf{w}}_i^{*}[k]$, and solve for the last one to ensure the $k$th dimension of the post-aggregation local model of client $i$ becomes ${\mathbf{w}}_{i}^{*}[k]$. Therefore, we have:
\begin{equation}
\label{equation:att_fedavg}
\begin{aligned}
\mathbf{w}_{(n+1,i)}[k]=\cdots=\mathbf{w}_{(n+m-1,i)}[k]={\mathbf{w}}_i^{*}[k],\
\mathbf{w}_{(n,i)}[k]=(n+1)\cdot \mathbf{w}_i^*[k] - \sum_{h=0}^{n-1} \mathbf{w}_{(h,i)}[k].
\end{aligned}
\end{equation}
In Theorem~\ref{thm:fedavg} in Appendix, we prove that the shared models crafted using Equation~\ref{equation:att_fedavg} constitute the optimal solution to the optimization problem in Equation~\ref{equation:optim}.

\subsection{Attacking Median}
\vspace{-2mm}
\myparatight{Bound derivation} Since Median discards outliers and $m < N/3$, the aggregated value $\mathbf{w}_i[k]$ lies within a bounded range. We sort the $k$th-dimensional shared models from non-selfish clients to $i$ in descending order as ${\mathbf{q}}_{0i}[k] \ge \cdots \ge {\mathbf{q}}_{(n-1)i}[k]$. When $m$ values shared by selfish clients are larger than ${\mathbf{q}}_{0i}[k]$, the median reaches its maximum; when $m$ values are smaller than ${\mathbf{q}}_{(n-1)i}[k]$, the median is minimized. Hence, the upper and lower bounds are:
\begin{align}
\label{equation:boundofmedian1}
\overline{\mathbf{w}}_i[k] = \frac{1}{2}(\mathbf{q}_{\lfloor\frac{n-m-1}{2}\rfloor i}[k] + \mathbf{q}_{\lfloor\frac{n-m}{2}\rfloor i}[k]), \quad
\underline{\mathbf{w}}_i[k] = \frac{1}{2}(\mathbf{q}_{\lfloor\frac{n+m-1}{2}\rfloor i}[k] + \mathbf{q}_{\lfloor\frac{n+m}{2}\rfloor i}[k]).
\end{align}
\myparatight{Shared model construction} Our goal is to craft $m$ selfish clients' shared values such that the median equals ${\mathbf{w}}_i^{*}[k]$. A simple strategy is to set all $m$ values to ${\mathbf{w}}_i^{*}[k]$, which works when the median falls exactly at this value. However, when the total number of clients is even and ${\mathbf{w}}_i^{*}[k]$ lies between the two middle elements of the aggregated sequence, the resulting median deviates. Specifically, if ${\mathbf{w}}_i^{*}[k] \in [\mathbf{q}_{ui}[k], \overline{\mathbf{w}}_i[k]]$ or $[\underline{\mathbf{w}}_i[k], \mathbf{q}_{vi}[k]]$, where $u = \lfloor\frac{n-m}{2}\rfloor$ and $v = \lfloor\frac{n+m-1}{2}\rfloor$, the final median becomes the average of ${\mathbf{w}}_i^{*}[k]$ and either $\mathbf{q}_{ui}[k]$ or $\mathbf{q}_{vi}[k]$, which is not the exactly ${\mathbf{w}}_i^{*}[k]$. To address this, we set $m{-}1$ selfish clients values to ${\mathbf{w}}_i^{*}[k]$ and adjust one value (e.g., $\mathbf{w}_{(n,i)}[k]$), like the following:
\begin{equation}
\label{equation:c_i_k_median}
\begin{aligned}
&\mathbf{w}_{(n+1,i)}[k] = \cdots = \mathbf{w}_{(n+m-1,i)}[k] = \mathbf{w}_i^*[k], \\
&\mathbf{w}_{(n,i)}[k] = 
\begin{cases}
2\mathbf{w}_i^*[k] - \mathbf{q}_{ui}[k], & \text{if } \mathbf{w}_i^*[k] > \mathbf{q}_{ui}[k], \\
2\mathbf{w}_i^*[k] - \mathbf{q}_{vi}[k], & \text{if } \mathbf{w}_i^*[k] < \mathbf{q}_{vi}[k], \\
\mathbf{w}_i^*[k], & \text{otherwise}.
\end{cases}
\end{aligned}
\end{equation}
We prove that the above design guarantees that the overall median equals $\mathbf{w}_i^*[k]$ in Appendix~\ref{appendix:proof_median}.

\begin{figure}[!t]
    \centering  
    \includegraphics[width=0.6 \textwidth]{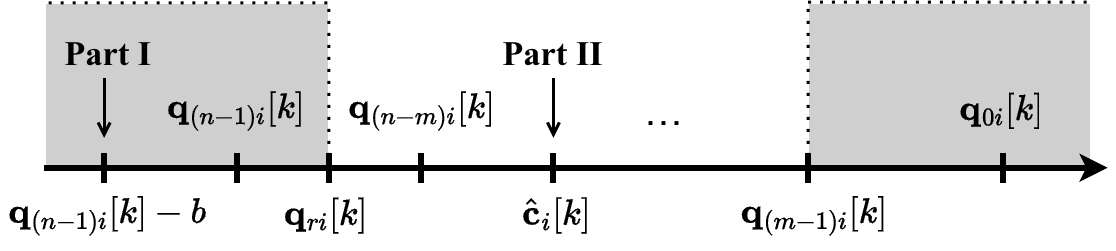}
    \caption{Example of selfish clients crafting shared models to attack Trimmed-mean. Values increase from left to right. Arrows indicate the shared model parameters of selfish clients. Parameters within the two grey regions will be filtered out by a non-selfish client.
    }
    \vspace{-4mm}
    \label{fig:line_trim}
\end{figure}

\subsection{Attacking Trimmed-mean}
\vspace{-2mm}
\myparatight{Bound derivation} In Trimmed-mean, each client drops the largest and smallest $c$ values per dimension and averages the remaining ones. We assume $c = m$, the number of selfish clients, which gives advantages to non-selfish clients. Given $m < N/3$, $\mathbf{w}_i[k]$ has bounded range. Similar to attacking Median, denote ${\mathbf{q}}_{0i}[k] \ge \cdots \ge {\mathbf{q}}_{(n-1)i}[k]$ be the sorted values from non-selfish clients. If all $m$ selfish values are set above ${\mathbf{q}}_{0i}[k]$ (or below ${\mathbf{q}}_{(n-1)i}[k]$), they get trimmed, leading to the following bounds:
\begin{align}
\label{equation:boundoftrim1}
\overline{\mathbf{w}}_i[k] = \frac{1}{n - m} \sum_{h=0}^{n - m - 1} \mathbf{q}_{hi}[k], \quad
\underline{\mathbf{w}}_i[k] = \frac{1}{n - m} \sum_{h=m}^{n - 1} \mathbf{q}_{hi}[k].
\end{align}
\myparatight{Shared model construction} After computing ${\mathbf{w}}_i^*[k]$, we design $m$ selfish models so that the trimmed mean equals ${\mathbf{w}}_i^*[k]$. Our intuition is, to manipulate Trimmed-mean, selfish clients can divide their shared models into two groups: one deliberately crafted to be filtered out, and the other carefully adjusted to influence the final aggregation. By balancing the number and values of unfiltered shared models, the attacker ensures that the trimmed mean equals the desired target $\mathbf{w}_i^*[k]$, while maintaining stealth under the aggregation rule's robustness. Therefore, we split the parameters of the $k$th dimension of selfish clients's shared models into two parts: \emph{Part I selfish model parameters} are extreme values that get trimmed, and \emph{Part II selfish model parameters} are values included in the aggregation, tuned to steer the average. We discuss two cases to craft the selfish models.

\textbf{Case I:} $\mathbf{w}_i^*[k] \le \tilde{\mathbf{w}}_i[k]$.
We set Part I selfish model parameters to be smaller than $\mathbf{q}_{(n-1)i}[k]$ so they will be trimmed, and set remaining $(n{-}r)$ Part II values to:
\begin{equation}
\label{equation:att_trim_case_I_c_i_k}
\mathbf{c}_i[k] = \frac{(n - m) \mathbf{w}_i^*[k] - \sum_{h = m}^{r - 1} \mathbf{q}_{hi}[k]}{n - r},
\end{equation}
with $r$ is the largest integer to ensure $n-m\leq r\leq n$ and ${\mathbf{w}}^{*}_i[k]\leq \frac{1}{n-m}((n-r)\cdot \mathbf{q}_{(m-1)i}[k]+\sum_{h=m}^{r-1}\mathbf{q}_{hi}[k])$. In summary, we  have:
\begin{equation}
\label{equation:att_case_I_trim2}
\mathbf{w}_{(j,i)}[k]=\left\{
\begin{array}{rcl}
\mathbf{q}_{(n-1)i}[k]-b, &  & 2n-r\leq j\leq n+m-1\\
{\mathbf{c}}_i[k], &  & n\leq j\leq 2n-r-1
\end{array}
\right.,
\end{equation}
where $b$ is a positive constant to guarantee that $\mathbf{w}_{(j,i)}[k]$ is smaller than $\mathbf{q}_{(n-1)i}[k]$, e.g., $b=1$ in our experiments. Figure~\ref{fig:line_trim} shows an example of crafting selfish clients' shared models.

\textbf{Case II:} $\mathbf{w}_i^*[k] > \tilde{\mathbf{w}}_i[k]$.
In this case, we aim to increase the trimmed mean. We set Part I selfish model parameters to be larger than $\mathbf{q}_{0i}[k]$ so they will be trimmed, and set the remaining $(r{+}1)$ Part II values to:
\begin{equation}
\label{equation:att_trim_case_II_c_i_k}
    {\mathbf{c}}_i[k]=\frac{(n-m)\cdot {\mathbf{w}}^{*}_i[k]-\sum\limits_{h=r+1}^{n-m-1}\mathbf{q}_{hi}[k]}{r+1}.
\end{equation}
where $r$ is the smallest integer such that $-1 \le r \le m - 1$ and ${\mathbf{w}}^{*}_i[k]\geq \frac{1}{n-m}((r+1)\cdot \mathbf{q}_{(n-m)i}[k]+\sum_{h=r+1}^{n-m-1}\mathbf{q}_{hi}[k])$. This ensures that Part II values are not filtered out. Formally, we construct:
\begin{equation}
\label{equation:att_trim3}
\mathbf{w}_{(j,i)}[k]=\left\{
\begin{array}{rcl}
\mathbf{q}_{0i}[k] + b, &  & n+r+1\leq j\leq n+m-1\\
{\mathbf{c}}_i[k], &  & n\leq j\leq n+r\\
\end{array}
\right..
\end{equation}
where $b$ is a positive constant to guarantee that $\mathbf{w}_{(j,i)}[k]$ is larger than $\mathbf{q}_{0i}[k]$.

This design ensures all Part I selfish model parameters are trimmed and Part II parameters dominate the aggregation, yielding $\mathbf{w}_i[k] = \mathbf{w}_i^*[k]$. See Appendix~\ref{appendix:proof_trim} for detailed proof. 

\subsection{Attacking Other Aggregation Rules}
\vspace{-2mm}
For non-coordinate-wise aggregation rules (e.g., Krum, FLTrust, FLDetector, RFA, and FLAME), exact optimal shared models are difficult to derive. These rules treat model parameters holistically using non-differentiable mechanisms like Euclidean distance scoring (e.g., FLTrust, FLDetector, RFA) or clustering-based aggregation (e.g., Krum, FLAME), making it hard to optimize per-dimension as we do for coordinate-wise rules. To demonstrate the transferability of \alg{} and its robustness against unknown rules, selfish clients can still use shared models crafted for FedAvg, Median, or Trimmed-mean (we use FedAvg in experiments). We also tailor our attack for FLAME, as described in Appendix~\ref{appendix_flame}. While these attacks may not be optimal, experiments show they remain effective.

\subsection{When to Start Attack}\label{section:whenstart}
\vspace{-2mm}
If selfish clients attack too early, non-selfish clients may fail to learn good local models, limiting the usefulness of the shared models and hurting the utility goal. To avoid this, we design a mechanism to determine when selfish clients should start attacking. The core idea is to wait until clients have learned sufficiently accurate local models. In each global round $t$, selfish clients share their local training losses $l_j^{(t)}$. Each client then computes the average loss and tracks the minimum value $l^{(t)}$ over time. When the decrease in $l^{(t)}$ over the past $I$ rounds becomes small—specifically, less than an $\epsilon$ fraction of the maximum decrease observed—selfish clients begin attacking by sending crafted models. This indicates that training has stabilized and clients likely have decent local models. The timing is controlled by two hyperparameters, $\epsilon$ and $I$, whose effects are analyzed in Section~\ref{exp:start_attack}.

\section{Evaluation}  \label{sec:exp}
\subsection{Experimental Setup} 
\vspace{-2mm}
\myparatight{Datasets}
We evaluate the proposed \alg{} on three benchmark datasets—CIFAR-10~\cite{krizhevsky2009learning} and FEMNIST~\cite{caldas2018leaf} for image classification, and Sent140~\cite{go2009twitter} for text classification. Detailed dataset descriptions are provided in Appendix~\ref{sec:append_dataset}.

\myparatight{Aggregation rules} A client can use different aggregation rules to aggregate the shared models as a post-aggregation local model. We evaluate \alg{} under a diverse set of aggregation rules, including both standard and Byzantine-robust methods: FedAvg~\cite{McMahan17}, Median~\cite{yin2018byzantine}, Trimmed-mean~\cite{yin2018byzantine}, Krum~\cite{blanchard2017machine}, FLTrust~\cite{cao2020fltrust}, FLDetector~\cite{zhang2022fldetector}, RFA~\cite{pillutla2022robust}, and FLAME~\cite{nguyen2022flame}. Detailed descriptions of these rules are provided in Appendix~\ref{sec:append_rules}.

\myparatight{DFL parameter settings}
We assume 20 clients by default, including 6 selfish and 14 non-selfish ones. Clients train a CNN on CIFAR-10 and FEMNIST, and an LSTM~\cite{hochreiter1997long} on Sent140. The CNN and LSTM architectures are shown in Tables~\ref{table:cnn_cifar},~\ref{table:cnn_femnist}, and~\ref{tab:lstm} in Appendix. Training hyperparameters are summarized in Table~\ref{tab:setting} in Appendix. We simulate non-IID for CIFAR-10 following~\cite{fang2020local} with $\rho = 0.7$ (details in Appendix~\ref{exp:noniid_setting}). FEMNIST and Sent140 are naturally non-IID, so no simulation is needed. All the experiments are finished on one single Quadro RTX 6000 GPU with 24GB memory.

\myparatight{Parameter settings for \alg{}}
By default, we set $\lambda=0.0$ for FedAvg, $\lambda=0.5$ for Median, and $\lambda=1.0$, $b=1.0$ for Trimmed-mean, as they use different aggregation strategies. Since selfish clients know the rule used in DFL, they can adjust $\lambda$ accordingly. We fix $\epsilon=0.1$ and $I=50$ across all aggregation rules, and report CIFAR-10 results unless stated otherwise. By default, selfish clients adopt the same aggregation rule and use all shared models, but we also explore cases where they use different aggregation rules or only their own shared models for aggregation. 

\myparatight{Evaluation metrics}
We use three metrics: \emph{mean test accuracy of selfish clients (MTAS)}, \emph{mean test accuracy of non-selfish clients (MTANS)}, and their difference (\emph{Gap = MTAS - MTANS}). MTAS and MTANS are the average accuracies of selfish and non-selfish clients' local models:
\begin{equation*}
\text{MTAS} = \frac{1}{m}\sum_{j=n}^{n+m-1}\text{ACC}(\mathbf{w}j, \mathcal{D}), \quad \text{MTANS} = \frac{1}{n}\sum{i=0}^{n-1}\text{ACC}(\mathbf{w}_i, \mathcal{D}),
\end{equation*}
where $\text{ACC}(\mathbf{w}_i, \mathcal{D})$ denotes client $i$’s test accuracy with local model $\mathbf{w}_i$ on testing set $\mathcal{D}$. Note that selfish clients share the same final model, while non-selfish ones may differ.

\vspace{-1mm}
\subsection{Compared Methods}
\vspace{-1mm}
{\bf Independent.} Each client independently trains a local model using its local data without DFL. 

{\bf Two Coalitions.}
This method divides the clients into two coalitions: selfish clients and non-selfish clients. Clients within the same coalition collaborate using DFL with the FedAvg aggregation rule. This scenario means that selfish clients do not join DFL together with non-selfish clients. 

{\bf TrimAttack~\cite{fang2020local}.}
TrimAttack was originally designed as a model poisoning attack to CFL. We extend it to our problem setting. 
Specifically, selfish clients use TrimAttack to craft  shared models sent to non-selfish clients
such that the post-aggregation local model of a non-selfish client after attack deviates substantially from the one before attack, in terms of both direction and magnitude.

{\bf GaussianAttack~\cite{blanchard2017machine}.}
A selfish client sends a random Gaussian vector as a shared model to a non-selfish client.
Each dimension of the Gaussian vector is generated from a normal distribution with zero mean and standard deviation 200.

\begin{table}[!t]
\renewcommand{\arraystretch}{1}
\centering
\fontsize{6.8}{8}\selectfont
\caption{Results (MTAS/MTANS/Gap) under different aggregation rules across datasets.}
\begin{tabular}{llccc}
\toprule
\textbf{Dataset} & \textbf{Method} & \textbf{FedAvg} & \textbf{Median} & \textbf{Trimmed-mean} \\
\midrule
\multirow{6}{*}{CIFAR-10}
& No attack        & 0.627 / 0.627 / 0.000 & 0.644 / 0.644 / 0.000 & 0.644 / 0.644 / 0.000 \\
& Independent      & 0.321 / 0.307 / 0.014 & 0.321 / 0.307 / 0.014 & 0.321 / 0.307 / 0.014 \\
& Two Coalitions   & 0.505 / 0.578 / -0.073 & 0.505 / 0.578 / -0.073 & 0.505 / 0.578 / -0.073 \\
& TrimAttack       & 0.207 / 0.165 / 0.043 & 0.524 / 0.513 / 0.011 & 0.456 / 0.440 / 0.016 \\
& GaussianAttack   & 0.098 / 0.089 / 0.008 & 0.619 / 0.617 / 0.001 & 0.613 / 0.609 / 0.004 \\
& \alg{}           & 0.475 / 0.326 / \textbf{0.150} & 0.543 / 0.425 / \textbf{0.119} & 0.597 / 0.484 / \textbf{0.113} \\
\midrule
\multirow{6}{*}{FEMNIST}
& No attack        & 0.790 / 0.790 / 0.000 & 0.791 / 0.791 / 0.000 & 0.795 / 0.795 / 0.000 \\
& Independent      & 0.543 / 0.545 / -0.002 & 0.543 / 0.545 / -0.002 & 0.543 / 0.545 / -0.002 \\
& Two Coalitions   & 0.755 / 0.784 / -0.029 & 0.755 / 0.784 / -0.029 & 0.755 / 0.784 / -0.029 \\
& TrimAttack       & 0.056 / 0.055 / 0.001 & 0.751 / 0.741 / 0.010 & 0.744 / 0.736 / 0.008 \\
& GaussianAttack   & 0.003 / 0.028 / -0.025 & 0.788 / 0.785 / 0.003 & 0.793 / 0.789 / 0.003 \\
& \alg{}           & 0.697 / 0.575 / \textbf{0.123} & 0.771 / 0.643 / \textbf{0.128} & 0.792 / 0.747 / \textbf{0.045} \\
\midrule
\multirow{6}{*}{Sent140}
& No attack        & 0.791 / 0.791 / 0.000 & 0.824 / 0.824 / 0.000 & 0.788 / 0.788 / 0.000 \\
& Independent      & 0.631 / 0.637 / -0.006 & 0.631 / 0.637 / -0.006 & 0.631 / 0.637 / -0.006 \\
& Two Coalitions   & 0.751 / 0.791 / -0.039 & 0.751 / 0.791 / -0.039 & 0.751 / 0.791 / -0.039 \\
& TrimAttack       & 0.740 / 0.715 / 0.025 & 0.760 / 0.763 / -0.003 & 0.771 / 0.765 / 0.006 \\
& GaussianAttack   & 0.536 / 0.528 / 0.008 & 0.807 / 0.807 / 0.000 & 0.785 / 0.785 / 0.000 \\
& \alg{}           & 0.799 / 0.647 / \textbf{0.152} & 0.804 / 0.683 / \textbf{0.121} & 0.816 / 0.730 / \textbf{0.085} \\
\bottomrule
\end{tabular}
\label{main_table_results}
\end{table}

\subsection{Main Results}
\label{Experimental_Results}
\myparatight{\alg{} achieves both attack goals}
Table~\ref{main_table_results} summarizes the performance of \alg{} and baselines across different datasets and aggregation rules. “No attack” indicates selfish clients behave honestly by sending their own local models. In summary, our results demonstrate that \alg{} successfully achieves both the utility and competitive-advantage goals. 

First, selfish clients achieve higher accuracy when collaborating with non-selfish clients under \alg{}, compared to training only among themselves (Two Coalitions), confirming the utility goal. For example, under Median and Trimmed-mean, MTASs of \alg{} are consistently higher than those of Two Coalitions across all datasets. The exception is FedAvg on CIFAR-10/FEMNIST. This is because  FedAvg is not robust, in which the post-aggregation local models of non-selfish clients are negatively influenced by shared models from selfish clients too much.

Second, \alg{} effectively increases the gap between the mean accuracy of selfish and non-selfish clients, achieving the competitive-advantage goal. Particularly, the Gap exceeds $10\%$ in most cases, showing a clear competitive advantage. On CIFAR-10, for instance, Gaps reach 15.0\%, 11.9\%, and 11.3\% under FedAvg, Median, and Trimmed-mean, respectively. 

\myparatight{\alg{} consistently outperforms existing attacks} TrimAttack and GaussianAttack produce negligible or even negative Gaps in most cases, especially under robust rules like Median. GaussianAttack performs poorly under robust aggregation rules like Median and Trimmed-mean, and achieves low MTAS under FedAvg. In contrast, \alg{} allows selfish clients to benefit from useful shared models provided by non-selfish clients, improving their local models. TrimAttack fails in this regard, as non-selfish clients learn inaccurate models and propagate poor updates, resulting in low accuracy for both selfish and non-selfish clients, and thus smaller Gaps.

\begin{wraptable}{r}{0.6\linewidth}
\vspace{-2mm}
\renewcommand{\arraystretch}{1}
\setlength{\tabcolsep}{4pt}
\centering
\fontsize{6.8}{8}\selectfont
\caption{MTAS/MTANS/Gap when a selfish client aggregates all vs. only selfish clients' models.}
\begin{tabular}{lccc}
\toprule
\textbf{Info.} & \textbf{TrimAttack} & \textbf{GaussianAttack} & \textbf{SelfishAttack} \\
\midrule
\multirow{1}{*}{All} 
  & 0.456 / {0.440} / 0.016 & {0.613} / {0.609} / \textbf{0.004} & {0.597} / {0.484} / \textbf{0.113} \\
\multirow{1}{*}{Selfish} 
  & {0.540} / {0.440} / \textbf{0.100} & 0.540 / 0.609 / -0.069 & 0.540 / 0.484 / 0.057 \\
\bottomrule
\end{tabular}
\vspace{-2mm}
\label{tab:partial}
\end{wraptable}

\myparatight{Aggregating shared models from all vs. selfish clients only}
By default, selfish clients aggregate shared models from both selfish and non-selfish clients. However, attacks like TrimAttack and GaussianAttack can degrade their local model accuracy, as they hinder non-selfish clients' learning and introduce harmful shared models into aggregation. To avoid this, selfish clients may choose to aggregate only models from selfish clients. In Table~\ref{tab:partial}, “All” refers to aggregating from all clients, and “Selfish” means using only selfish clients' models. Selfish clients use FedAvg to enhance accuracy, while non-selfish clients use Trimmed-mean for robustness. \alg{} benefits from aggregating all shared models, as it still preserves useful information from non-selfish clients. In contrast, TrimAttack performs better when selfish clients exclude non-selfish models, which are degraded under attack. Overall, the best strategy is to use \alg{} and aggregate from all clients.

\subsection{Ablation Studies}
This section presents ablation studies on \alg{}, examining the impact of $\lambda$, degree of non‑IID data, fraction of selfish clients, and other aggregation rules. Appendix~\ref{sec:append_ablation} further explores the impact of $\epsilon$ and $I$, total number of clients, and aggregation rule used by selfish clients.

\myparatight{Impact of $\lambda$}
Fig.~\ref{fig:diffalpha} shows how $\lambda$ in Equation~\ref{equation:optim} affects the Gap under different aggregation rules. For robust aggregation rules like Median and Trimmed-mean, increasing $\lambda$ initially increases the Gap, as more emphasis is placed on the competitive-advantage goal. For example, with Trimmed-mean on CIFAR-10, the Gap rises from 6.2\% at $\lambda{=}0.5$ to 11.3\% at $\lambda{=}1.0$. However, beyond a threshold, e.g., $\lambda{>}0.5$ for Median or $1.0$ for Trimmed-mean, the Gap drops because overly degraded non-selfish models harm the selfish clients' local models in later rounds.

\myparatight{Impact of Non-IID degree}
Fig.~\ref{fig:diffnoniid} shows that \alg{} consistently outperforms other attacks on CIFAR-10 across various Non-IID levels and aggregation rules. For instance, when attacking Median, \alg{} maintains at least a 5\% higher Gap than all baselines. Other attacks typically yield Gaps near zero, except for TrimAttack under FedAvg with moderate Non-IID (e.g., 0.5 or 0.7). As the Non-IID level increases, \alg{} becomes more effective, except for FedAvg where the Gap slightly declines from Non-IID $0.7$ to $0.9$ due to aggregation instability. Still, \alg{} achieves a 7.2\% Gap under FedAvg at the highest Non-IID level.

\begin{figure*}[!t]
 \centering
 \subfloat[FedAvg\label{fig:alphafedavg}]{\includegraphics[width=0.3 \textwidth]{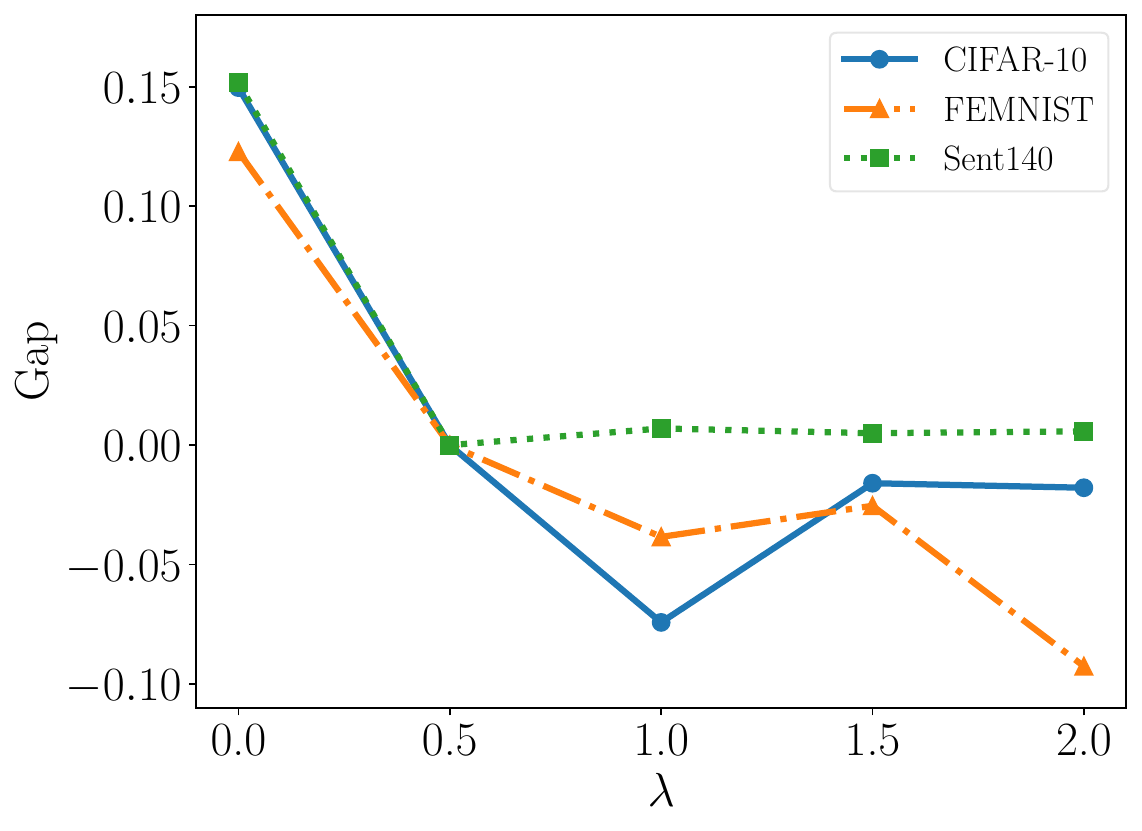}}
 \subfloat[Median\label{fig:alphamedian}]{\includegraphics[width=0.3 \textwidth]{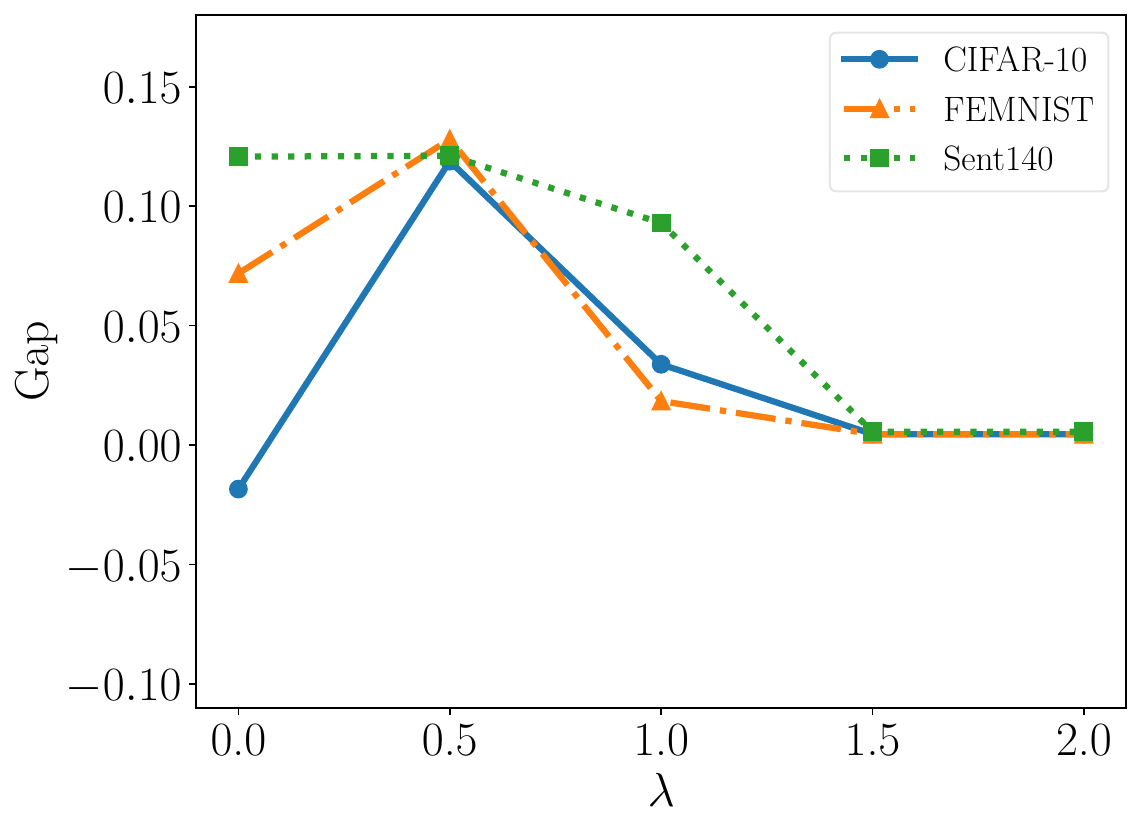}}
 \subfloat[Trimmed-mean\label{fig:alphatrim}]{\includegraphics[width=0.3 \textwidth]{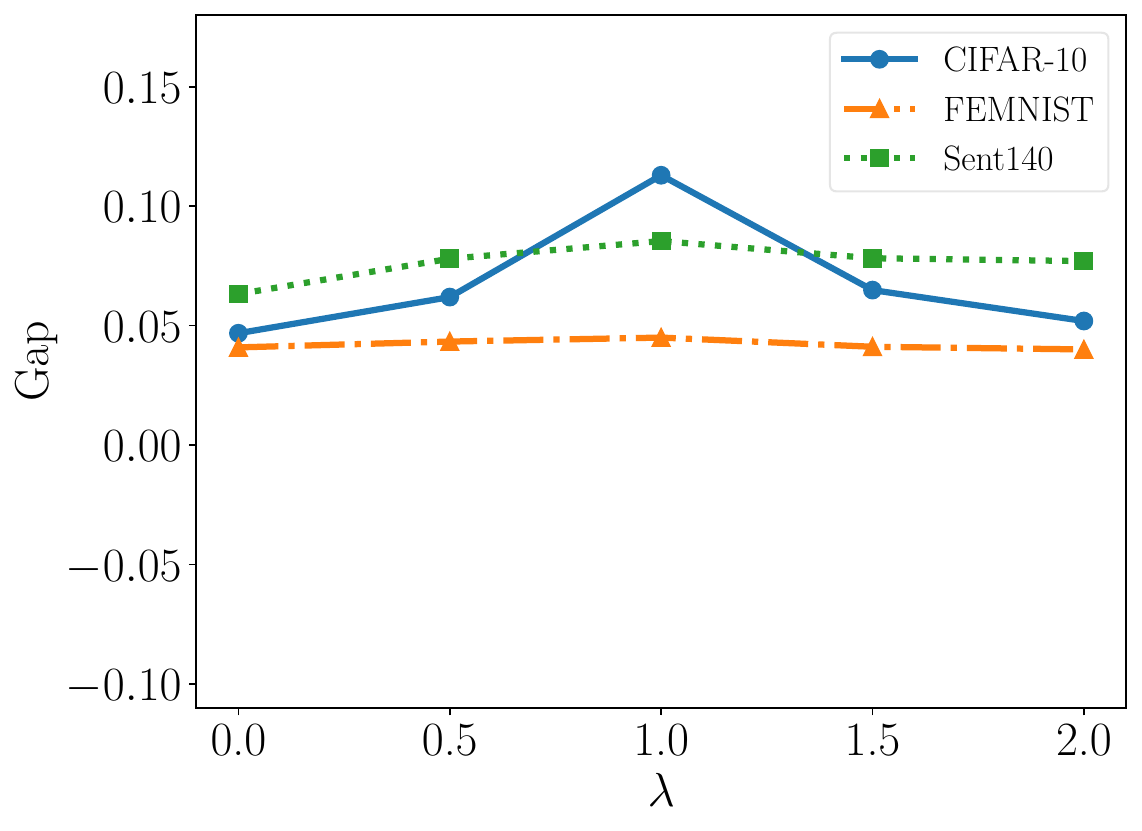}}\\
 \caption{Impact of $\lambda$ on  Gap of \alg{} when DFL uses different aggregation rules.}
 \label{fig:diffalpha}
   \vspace{-5mm}
\end{figure*}
\begin{figure*}[!t]
 \centering
 \subfloat[FedAvg\label{fig:noniidfedavg}]{\includegraphics[width=0.3 \textwidth]{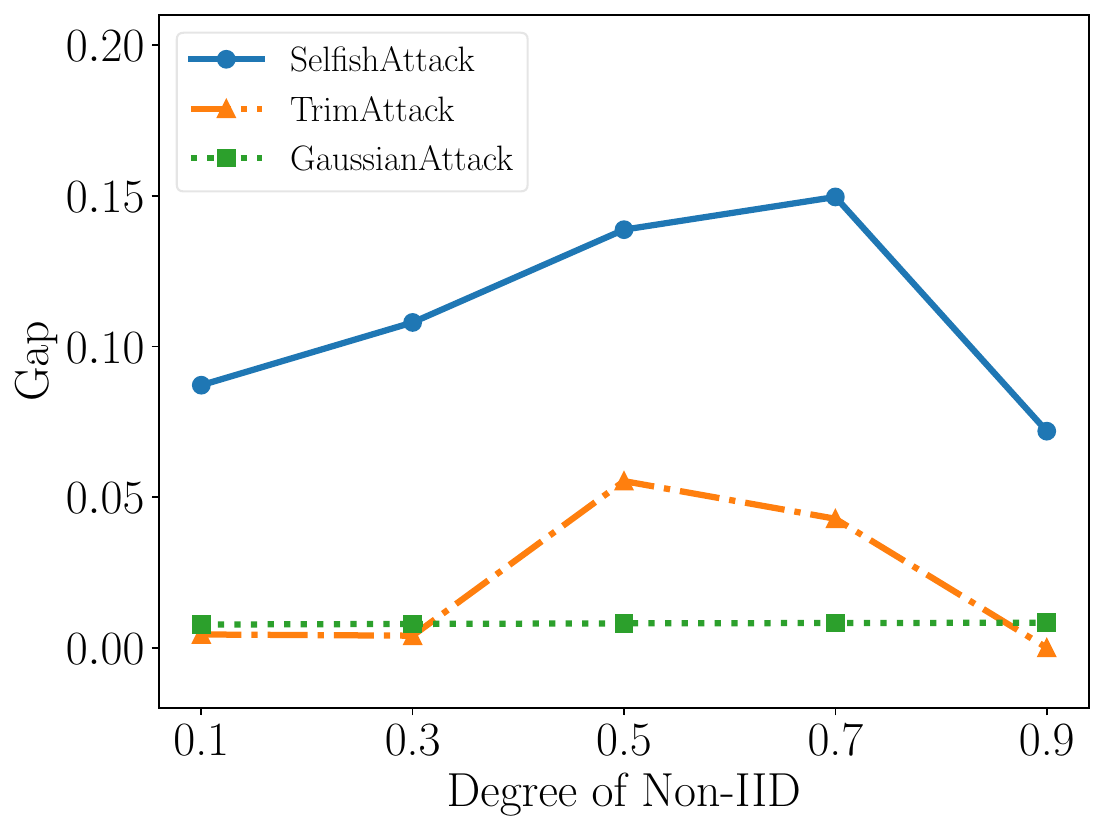}}
 \subfloat[Median\label{fig:noniidmedian}]{\includegraphics[width=0.3 \textwidth]{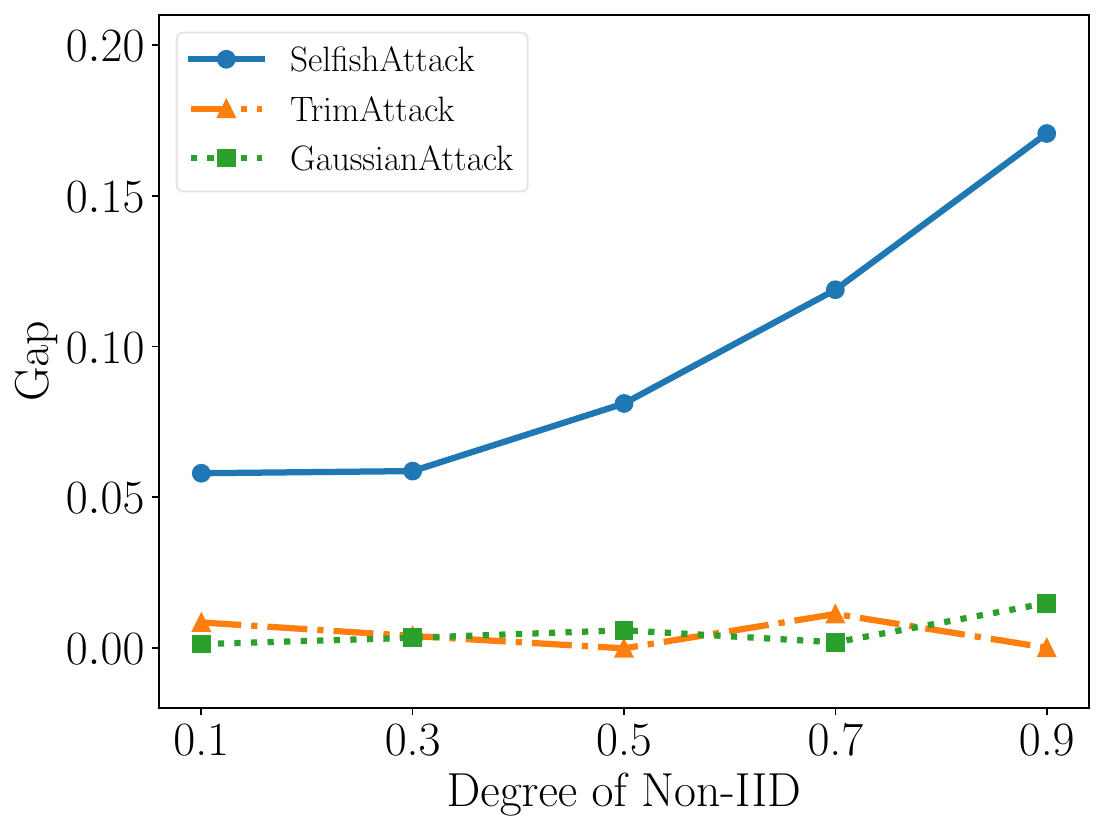}}
 \subfloat[Trimmed-mean\label{fig:noniidtrim}]{\includegraphics[width=0.3 \textwidth]{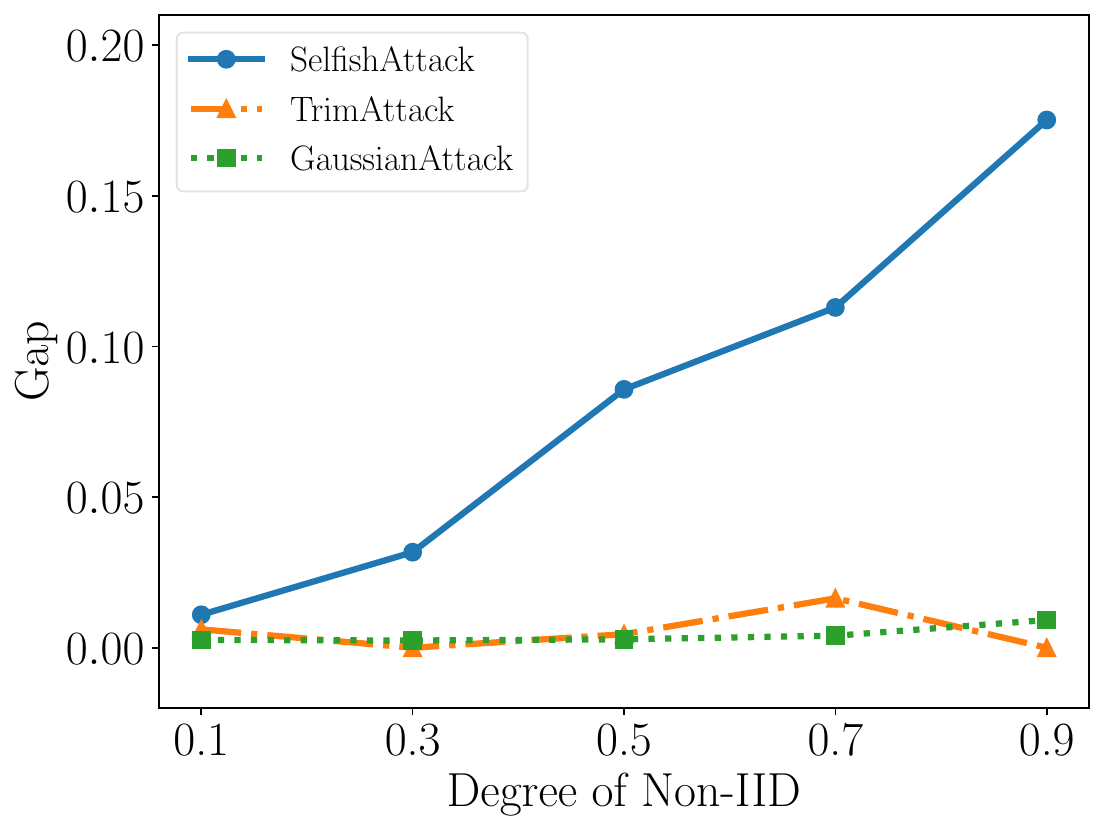}}\\
 \caption{Impact of the degree of Non-IID on  Gap  when DFL uses different aggregation rules.}
 \label{fig:diffnoniid}
  \vspace{-5mm}
\end{figure*}

\begin{figure*}[!t]
 \centering
 \subfloat[FedAvg\label{fig:nbyzfedavg}]{\includegraphics[width=0.3 \textwidth]{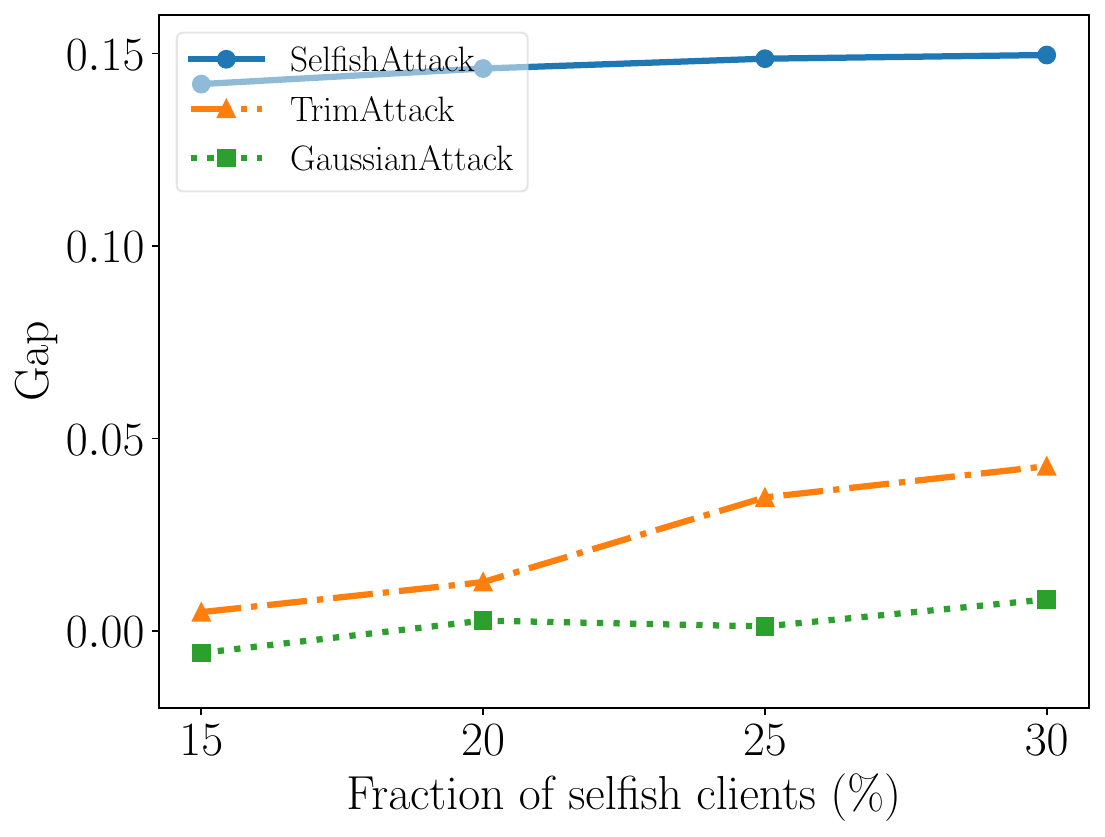}}
 \subfloat[Median\label{fig:nbyzmedian}]{\includegraphics[width=0.3 \textwidth]{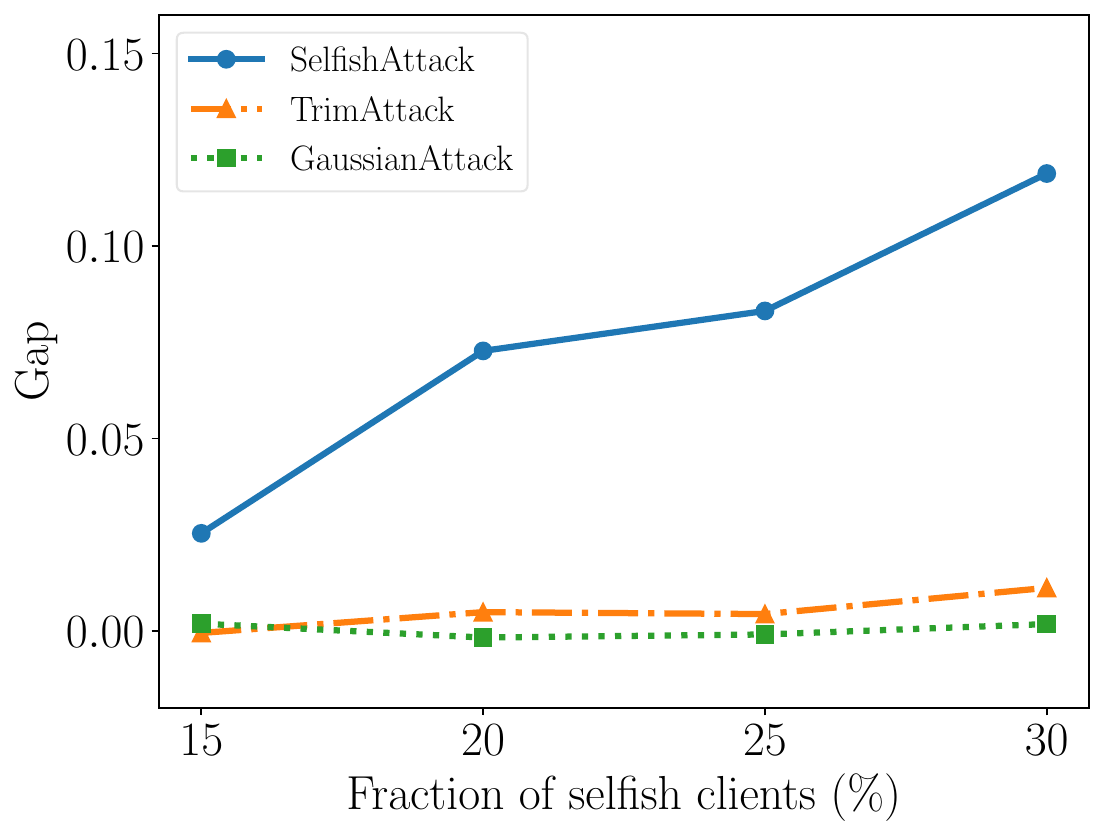}}
 \subfloat[Trimmed-mean\label{fig:nbyztrim}]{\includegraphics[width=0.3 \textwidth]{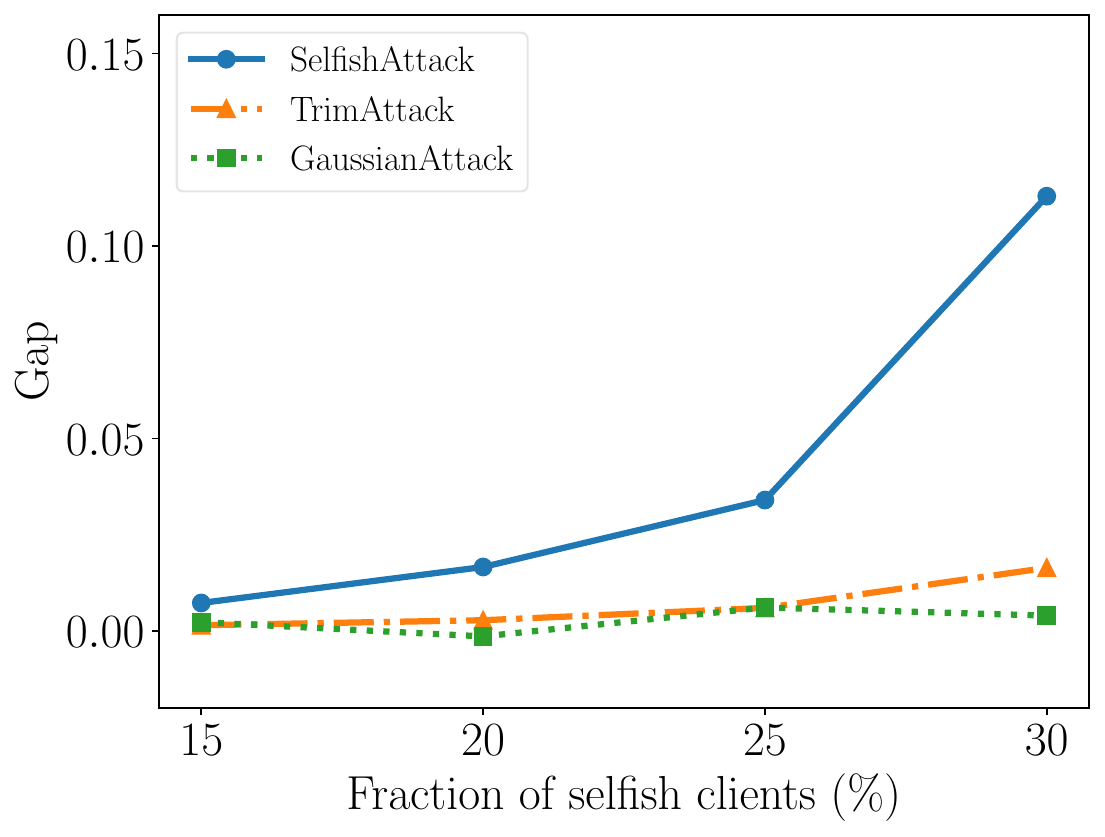}}\\
 \caption{Impact of the fraction of selfish clients on  Gap when DFL uses different aggregation rules.}
 \label{fig:diffnbyz}
  \vspace{-5mm}
\end{figure*}

\myparatight{Impact of selfish client fraction}
Fig.~\ref{fig:diffnbyz} shows the impact of the selfish client fraction. \alg{} consistently outperforms other attacks across aggregation rules and different fractions. Compared attacks generally yield Gaps close to 0, except TrimAttack on FedAvg. For \alg{}, the Gap remains stable under FedAvg but increases with more selfish clients under Median and Trimmed-mean. This is because more selfish clients enlarge the upper and lower bounds in Equations~\ref{equation:boundofmedian1} and~\ref{equation:boundoftrim1}, expanding the attack range and improving effectiveness.

\begin{wraptable}{r}{0.5\linewidth}
\vspace{-3mm}
\renewcommand{\arraystretch}{1}
\setlength{\tabcolsep}{5pt}
\centering
\fontsize{8.5}{9}\selectfont
\caption{Results of attacking other aggregation rules used by the non-selfish clients.}
\begin{tabular}{lccc}
\toprule
\textbf{Aggregation Rule} & \textbf{MTAS} & \textbf{MTANS} & \textbf{Gap} \\
\midrule
Krum        & 0.545 & 0.356 & {0.189} \\
FLTrust     & 0.537 & 0.469 & 0.069 \\
FLDetector  & 0.556 & 0.439 & 0.118 \\
RFA  & 0.484 & 0.333 & 0.151 \\
FLAME       & 0.525 & 0.420 & 0.105 \\
\bottomrule
\end{tabular}
\vspace{-2mm}
\label{tab:transferability}
\end{wraptable}

\myparatight{Attacking other aggregation rules}
Table~\ref{tab:transferability} shows that \alg{} remains effective when non-selfish clients use Krum, FLTrust, FLDetector, {RFA}, or {FLAME}. Selfish clients apply a tailored version for FLAME (detailed in Appendix~\ref{appendix_flame}), or the FedAvg-based version of \alg{} for other aggregation rules. In all cases, \alg{} achieves notable Gaps (e.g., 18.9\% for Krum, 15.1\% for {RFA}, and 10.5\% for {FLAME}), indicating \alg can transfer to these aggregation rules.

\section{Conclusion, Limitations, and Future Work}\label{sec:conclusion}
\vspace{-2mm}
In this paper, we propose \alg{}, the first competitive advantage attack for DFL. In \alg{}, selfish clients craft shared models to 1). learn more accurate local models than performing DFL among themselves, and 2). outperform non-selfish clients. These models are generated by solving an optimization problem that balances two attack goals. Experiments on three benchmarks show that \alg{} achieves both attack goals and outperforms conventional poisoning attacks. However, while \alg{} is effective across various aggregation rules, its optimality is unknown for non-coordinate-wise rules (e.g., Krum, FLAME). Future work includes developing attacks that adapt to unknown or dynamic aggregation rules, extending \alg to broader settings, and designing defenses against \alg.

\section*{Acknowledgement}
We thank the anonymous reviewers for their constructive comments. This work was supported by NSF under grant no. 2131859, 2125977, 2112562, and 1937787.

\bibliographystyle{plain}
\bibliography{refs}

\clearpage
\section*{NeurIPS Paper Checklist}

\begin{enumerate}

\item {\bf Claims}
    \item[] Question: Do the main claims made in the abstract and introduction accurately reflect the paper's contributions and scope?
    \item[] Answer: \answerYes{} 
    \item[] Justification: The main claims in the abstract and introduction—that SelfishAttack is a new family of attacks to DFL that gives selfish clients a competitive advantage—are fully supported by both the theoretical formulation and empirical results in Sections~\ref{sec:threat_model}–~\ref{sec:exp}.
    \item[] Guidelines: 
    \begin{itemize}
        \item The answer NA means that the abstract and introduction do not include the claims made in the paper.
        \item The abstract and/or introduction should clearly state the claims made, including the contributions made in the paper and important assumptions and limitations. A No or NA answer to this question will not be perceived well by the reviewers. 
        \item The claims made should match theoretical and experimental results, and reflect how much the results can be expected to generalize to other settings. 
        \item It is fine to include aspirational goals as motivation as long as it is clear that these goals are not attained by the paper. 
    \end{itemize}

\item {\bf Limitations}
    \item[] Question: Does the paper discuss the limitations of the work performed by the authors?
    \item[] Answer: \answerYes{} 
    \item[] Justification: We discuss limitations in Section~\ref{sec:conclusion}. We further have a `Discussion and Limitations' Section in Appendix~\ref{sec:discussion}.
    \item[] Guidelines:
    \begin{itemize}
        \item The answer NA means that the paper has no limitation while the answer No means that the paper has limitations, but those are not discussed in the paper. 
        \item The authors are encouraged to create a separate "Limitations" section in their paper.
        \item The paper should point out any strong assumptions and how robust the results are to violations of these assumptions (e.g., independence assumptions, noiseless settings, model well-specification, asymptotic approximations only holding locally). The authors should reflect on how these assumptions might be violated in practice and what the implications would be.
        \item The authors should reflect on the scope of the claims made, e.g., if the approach was only tested on a few datasets or with a few runs. In general, empirical results often depend on implicit assumptions, which should be articulated.
        \item The authors should reflect on the factors that influence the performance of the approach. For example, a facial recognition algorithm may perform poorly when image resolution is low or images are taken in low lighting. Or a speech-to-text system might not be used reliably to provide closed captions for online lectures because it fails to handle technical jargon.
        \item The authors should discuss the computational efficiency of the proposed algorithms and how they scale with dataset size.
        \item If applicable, the authors should discuss possible limitations of their approach to address problems of privacy and fairness.
        \item While the authors might fear that complete honesty about limitations might be used by reviewers as grounds for rejection, a worse outcome might be that reviewers discover limitations that aren't acknowledged in the paper. The authors should use their best judgment and recognize that individual actions in favor of transparency play an important role in developing norms that preserve the integrity of the community. Reviewers will be specifically instructed to not penalize honesty concerning limitations.
    \end{itemize}

\item {\bf Theory assumptions and proofs}
    \item[] Question: For each theoretical result, does the paper provide the full set of assumptions and a complete (and correct) proof?
    \item[] Answer: \answerYes{} 
    \item[] Justification: Theoretical results such as the optimal solution to the optimization problem are provided in Section~\ref{section:method} with detailed derivations and formal proofs in Appendix~\ref{appendix:proof_optim}–~\ref{appendix:proof_trim}.
    \item[] Guidelines:
    \begin{itemize}
        \item The answer NA means that the paper does not include theoretical results. 
        \item All the theorems, formulas, and proofs in the paper should be numbered and cross-referenced.
        \item All assumptions should be clearly stated or referenced in the statement of any theorems.
        \item The proofs can either appear in the main paper or the supplemental material, but if they appear in the supplemental material, the authors are encouraged to provide a short proof sketch to provide intuition. 
        \item Inversely, any informal proof provided in the core of the paper should be complemented by formal proofs provided in appendix or supplemental material.
        \item Theorems and Lemmas that the proof relies upon should be properly referenced. 
    \end{itemize}

    \item {\bf Experimental result reproducibility}
    \item[] Question: Does the paper fully disclose all the information needed to reproduce the main experimental results of the paper to the extent that it affects the main claims and/or conclusions of the paper (regardless of whether the code and data are provided or not)?
    \item[] Answer: \answerYes{} 
    \item[] Justification: Section~\ref{sec:exp} and Appendix~\ref{sec:append_dataset}–~\ref{exp:noniid_setting} include full details of datasets, model architectures, hyperparameters, aggregation rules, and evaluation metrics necessary for reproducing the experiments.
    \item[] Guidelines:
    \begin{itemize}
        \item The answer NA means that the paper does not include experiments.
        \item If the paper includes experiments, a No answer to this question will not be perceived well by the reviewers: Making the paper reproducible is important, regardless of whether the code and data are provided or not.
        \item If the contribution is a dataset and/or model, the authors should describe the steps taken to make their results reproducible or verifiable. 
        \item Depending on the contribution, reproducibility can be accomplished in various ways. For example, if the contribution is a novel architecture, describing the architecture fully might suffice, or if the contribution is a specific model and empirical evaluation, it may be necessary to either make it possible for others to replicate the model with the same dataset, or provide access to the model. In general. releasing code and data is often one good way to accomplish this, but reproducibility can also be provided via detailed instructions for how to replicate the results, access to a hosted model (e.g., in the case of a large language model), releasing of a model checkpoint, or other means that are appropriate to the research performed.
        \item While NeurIPS does not require releasing code, the conference does require all submissions to provide some reasonable avenue for reproducibility, which may depend on the nature of the contribution. For example
        \begin{enumerate}
            \item If the contribution is primarily a new algorithm, the paper should make it clear how to reproduce that algorithm.
            \item If the contribution is primarily a new model architecture, the paper should describe the architecture clearly and fully.
            \item If the contribution is a new model (e.g., a large language model), then there should either be a way to access this model for reproducing the results or a way to reproduce the model (e.g., with an open-source dataset or instructions for how to construct the dataset).
            \item We recognize that reproducibility may be tricky in some cases, in which case authors are welcome to describe the particular way they provide for reproducibility. In the case of closed-source models, it may be that access to the model is limited in some way (e.g., to registered users), but it should be possible for other researchers to have some path to reproducing or verifying the results.
        \end{enumerate}
    \end{itemize}

\item {\bf Open access to data and code}
    \item[] Question: Does the paper provide open access to the data and code, with sufficient instructions to faithfully reproduce the main experimental results, as described in supplemental material?
    \item[] Answer: \answerNo{} 
    \item[] Justification: All datasets used are publicly available, and we will release our code and implementation upon publication to facilitate reproducibility.
    \item[] Guidelines:
    \begin{itemize}
        \item The answer NA means that paper does not include experiments requiring code.
        \item Please see the NeurIPS code and data submission guidelines (\url{https://nips.cc/public/guides/CodeSubmissionPolicy}) for more details.
        \item While we encourage the release of code and data, we understand that this might not be possible, so “No” is an acceptable answer. Papers cannot be rejected simply for not including code, unless this is central to the contribution (e.g., for a new open-source benchmark).
        \item The instructions should contain the exact command and environment needed to run to reproduce the results. See the NeurIPS code and data submission guidelines (\url{https://nips.cc/public/guides/CodeSubmissionPolicy}) for more details.
        \item The authors should provide instructions on data access and preparation, including how to access the raw data, preprocessed data, intermediate data, and generated data, etc.
        \item The authors should provide scripts to reproduce all experimental results for the new proposed method and baselines. If only a subset of experiments are reproducible, they should state which ones are omitted from the script and why.
        \item At submission time, to preserve anonymity, the authors should release anonymized versions (if applicable).
        \item Providing as much information as possible in supplemental material (appended to the paper) is recommended, but including URLs to data and code is permitted.
    \end{itemize}

\item {\bf Experimental setting/details}
    \item[] Question: Does the paper specify all the training and test details (e.g., data splits, hyperparameters, how they were chosen, type of optimizer, etc.) necessary to understand the results?
    \item[] Answer: \answerYes{} 
    \item[] Justification: Training/test splits, model architectures, and all relevant hyperparameters are specified in Section~\ref{sec:exp} and Appendix~\ref{sec:append_dataset}–~\ref{exp:noniid_setting}.
    \item[] Guidelines:
    \begin{itemize}
        \item The answer NA means that the paper does not include experiments.
        \item The experimental setting should be presented in the core of the paper to a level of detail that is necessary to appreciate the results and make sense of them.
        \item The full details can be provided either with the code, in appendix, or as supplemental material.
    \end{itemize}

\item {\bf Experiment statistical significance}
    \item[] Question: Does the paper report error bars suitably and correctly defined or other appropriate information about the statistical significance of the experiments?
    \item[] Answer: \answerYes{} 
    \item[] Justification: While formal error bars are not reported, we evaluate our method across three diverse datasets, eight aggregation rules, multiple baselines, and conduct extensive ablations, all of which consistently demonstrate the advantage of SelfishAttack. These results support the statistical robustness and generality of our findings.
    \item[] Guidelines:
    \begin{itemize}
        \item The answer NA means that the paper does not include experiments.
        \item The authors should answer "Yes" if the results are accompanied by error bars, confidence intervals, or statistical significance tests, at least for the experiments that support the main claims of the paper.
        \item The factors of variability that the error bars are capturing should be clearly stated (for example, train/test split, initialization, random drawing of some parameter, or overall run with given experimental conditions).
        \item The method for calculating the error bars should be explained (closed form formula, call to a library function, bootstrap, etc.)
        \item The assumptions made should be given (e.g., Normally distributed errors).
        \item It should be clear whether the error bar is the standard deviation or the standard error of the mean.
        \item It is OK to report 1-sigma error bars, but one should state it. The authors should preferably report a 2-sigma error bar than state that they have a 96\% CI, if the hypothesis of Normality of errors is not verified.
        \item For asymmetric distributions, the authors should be careful not to show in tables or figures symmetric error bars that would yield results that are out of range (e.g. negative error rates).
        \item If error bars are reported in tables or plots, The authors should explain in the text how they were calculated and reference the corresponding figures or tables in the text.
    \end{itemize}

\item {\bf Experiments compute resources}
    \item[] Question: For each experiment, does the paper provide sufficient information on the computer resources (type of compute workers, memory, time of execution) needed to reproduce the experiments?
    \item[] Answer: \answerYes{} 
    \item[] Justification:  Section~\ref{sec:exp} specifies that all experiments were run on a single Quadro RTX 6000 GPU with 24GB memory.
    \item[] Guidelines:
    \begin{itemize}
        \item The answer NA means that the paper does not include experiments.
        \item The paper should indicate the type of compute workers CPU or GPU, internal cluster, or cloud provider, including relevant memory and storage.
        \item The paper should provide the amount of compute required for each of the individual experimental runs as well as estimate the total compute. 
        \item The paper should disclose whether the full research project required more compute than the experiments reported in the paper (e.g., preliminary or failed experiments that didn't make it into the paper). 
    \end{itemize}
    
\item {\bf Code of ethics}
    \item[] Question: Does the research conducted in the paper conform, in every respect, with the NeurIPS Code of Ethics \url{https://neurips.cc/public/EthicsGuidelines}?
    \item[] Answer: \answerYes{} 
    \item[] Justification: Our research complies with the NeurIPS Code of Ethics. We responsibly disclose a new vulnerability in DFL and refrain from releasing potentially harmful code.
    \item[] Guidelines:
    \begin{itemize}
        \item The answer NA means that the authors have not reviewed the NeurIPS Code of Ethics.
        \item If the authors answer No, they should explain the special circumstances that require a deviation from the Code of Ethics.
        \item The authors should make sure to preserve anonymity (e.g., if there is a special consideration due to laws or regulations in their jurisdiction).
    \end{itemize}

\item {\bf Broader impacts}
    \item[] Question: Does the paper discuss both potential positive societal impacts and negative societal impacts of the work performed?
    \item[] Answer: \answerYes{} 
    \item[] Justification:  We provide a discussion of both positive and negative societal impacts in Appendix~\ref{sec:broader_impact}. While our work could be misused to harm collaborative learning systems, we discuss potential directions for defending against \alg{} in Appendix~\ref{sec:discussion} to encourage responsible follow-up research.
    \item[] Guidelines:
    \begin{itemize}
        \item The answer NA means that there is no societal impact of the work performed.
        \item If the authors answer NA or No, they should explain why their work has no societal impact or why the paper does not address societal impact.
        \item Examples of negative societal impacts include potential malicious or unintended uses (e.g., disinformation, generating fake profiles, surveillance), fairness considerations (e.g., deployment of technologies that could make decisions that unfairly impact specific groups), privacy considerations, and security considerations.
        \item The conference expects that many papers will be foundational research and not tied to particular applications, let alone deployments. However, if there is a direct path to any negative applications, the authors should point it out. For example, it is legitimate to point out that an improvement in the quality of generative models could be used to generate deepfakes for disinformation. On the other hand, it is not needed to point out that a generic algorithm for optimizing neural networks could enable people to train models that generate Deepfakes faster.
        \item The authors should consider possible harms that could arise when the technology is being used as intended and functioning correctly, harms that could arise when the technology is being used as intended but gives incorrect results, and harms following from (intentional or unintentional) misuse of the technology.
        \item If there are negative societal impacts, the authors could also discuss possible mitigation strategies (e.g., gated release of models, providing defenses in addition to attacks, mechanisms for monitoring misuse, mechanisms to monitor how a system learns from feedback over time, improving the efficiency and accessibility of ML).
    \end{itemize}
    
\item {\bf Safeguards}
    \item[] Question: Does the paper describe safeguards that have been put in place for responsible release of data or models that have a high risk for misuse (e.g., pretrained language models, image generators, or scraped datasets)?
    \item[] Answer: \answerNA{} 
    \item[] Justification: We do not release any high-risk data or models. Our work is conceptual and does not involve pretrained models or scraped datasets.
    \item[] Guidelines:
    \begin{itemize}
        \item The answer NA means that the paper poses no such risks.
        \item Released models that have a high risk for misuse or dual-use should be released with necessary safeguards to allow for controlled use of the model, for example by requiring that users adhere to usage guidelines or restrictions to access the model or implementing safety filters. 
        \item Datasets that have been scraped from the Internet could pose safety risks. The authors should describe how they avoided releasing unsafe images.
        \item We recognize that providing effective safeguards is challenging, and many papers do not require this, but we encourage authors to take this into account and make a best faith effort.
    \end{itemize}

\item {\bf Licenses for existing assets}
    \item[] Question: Are the creators or original owners of assets (e.g., code, data, models), used in the paper, properly credited and are the license and terms of use explicitly mentioned and properly respected?
    \item[] Answer: \answerYes{} 
    \item[] Justification: All datasets used (CIFAR-10, FEMNIST, Sent140) are publicly available and cited in the paper with license details included in Appendix~\ref{sec:append_dataset}.
    \item[] Guidelines:
    \begin{itemize}
        \item The answer NA means that the paper does not use existing assets.
        \item The authors should cite the original paper that produced the code package or dataset.
        \item The authors should state which version of the asset is used and, if possible, include a URL.
        \item The name of the license (e.g., CC-BY 4.0) should be included for each asset.
        \item For scraped data from a particular source (e.g., website), the copyright and terms of service of that source should be provided.
        \item If assets are released, the license, copyright information, and terms of use in the package should be provided. For popular datasets, \url{paperswithcode.com/datasets} has curated licenses for some datasets. Their licensing guide can help determine the license of a dataset.
        \item For existing datasets that are re-packaged, both the original license and the license of the derived asset (if it has changed) should be provided.
        \item If this information is not available online, the authors are encouraged to reach out to the asset's creators.
    \end{itemize}

\item {\bf New assets}
    \item[] Question: Are new assets introduced in the paper well documented and is the documentation provided alongside the assets?
    \item[] Answer: \answerNA{} 
    \item[] Justification: No new datasets or models are released.
    \item[] Guidelines:
    \begin{itemize}
        \item The answer NA means that the paper does not release new assets.
        \item Researchers should communicate the details of the dataset/code/model as part of their submissions via structured templates. This includes details about training, license, limitations, etc. 
        \item The paper should discuss whether and how consent was obtained from people whose asset is used.
        \item At submission time, remember to anonymize your assets (if applicable). You can either create an anonymized URL or include an anonymized zip file.
    \end{itemize}

\item {\bf Crowdsourcing and research with human subjects}
    \item[] Question: For crowdsourcing experiments and research with human subjects, does the paper include the full text of instructions given to participants and screenshots, if applicable, as well as details about compensation (if any)? 
    \item[] Answer: \answerNA{} 
    \item[] Justification: This paper does not involve human subjects or crowdsourcing.
    \item[] Guidelines:
    \begin{itemize}
        \item The answer NA means that the paper does not involve crowdsourcing nor research with human subjects.
        \item Including this information in the supplemental material is fine, but if the main contribution of the paper involves human subjects, then as much detail as possible should be included in the main paper. 
        \item According to the NeurIPS Code of Ethics, workers involved in data collection, curation, or other labor should be paid at least the minimum wage in the country of the data collector. 
    \end{itemize}

\item {\bf Institutional review board (IRB) approvals or equivalent for research with human subjects}
    \item[] Question: Does the paper describe potential risks incurred by study participants, whether such risks were disclosed to the subjects, and whether Institutional Review Board (IRB) approvals (or an equivalent approval/review based on the requirements of your country or institution) were obtained?
    \item[] Answer: \answerNA{} 
    \item[] Justification: This paper does not involve human subjects or IRB-relevant research.
    \item[] Guidelines:
    \begin{itemize}
        \item The answer NA means that the paper does not involve crowdsourcing nor research with human subjects.
        \item Depending on the country in which research is conducted, IRB approval (or equivalent) may be required for any human subjects research. If you obtained IRB approval, you should clearly state this in the paper. 
        \item We recognize that the procedures for this may vary significantly between institutions and locations, and we expect authors to adhere to the NeurIPS Code of Ethics and the guidelines for their institution. 
        \item For initial submissions, do not include any information that would break anonymity (if applicable), such as the institution conducting the review.
    \end{itemize}

\item {\bf Declaration of LLM usage}
    \item[] Question: Does the paper describe the usage of LLMs if it is an important, original, or non-standard component of the core methods in this research? Note that if the LLM is used only for writing, editing, or formatting purposes and does not impact the core methodology, scientific rigorousness, or originality of the research, declaration is not required.
    \item[] Answer: \answerNA{} 
    \item[] Justification: This work does not rely on large language models as a primary methodological component.
    \item[] Guidelines:
    \begin{itemize}
        \item The answer NA means that the core method development in this research does not involve LLMs as any important, original, or non-standard components.
        \item Please refer to our LLM policy (\url{https://neurips.cc/Conferences/2025/LLM}) for what should or should not be described.
    \end{itemize}

\end{enumerate}
\clearpage
\appendix
\section*{Appendix}

\section{Preliminaries and Related Work}

\subsection{Decentralized Federated Learning}\label{appendix:dfl}
Consider a decentralized federated learning (DFL) system with $N$ clients.
Each client $h$ has a local training dataset $D_h$, where $h=0,1,\cdots,N-1$.
These $N$ clients collaboratively train a machine learning model without the reliance of a central server.
In particular, DFL aims to find a model $\mathbf{w}$ that minimizes the weighted average of losses among all clients: $\min_{\mathbf{w} \in\mathbb{R}^d} \frac{1}{N}  \sum_{h =0}^{N-1}  F_h(\mathbf{w}, \mathcal{D}_h)$,
where $F_h(\mathbf{w}, \mathcal{D}_h) = \frac{1}{| \mathcal{D}_h |} \sum_{\zeta \in \mathcal{D}_h} F(\mathbf{w}, \zeta)$ is the local training objective of client $h$, $d$ is the number of model parameters, and $|\mathcal{D}_h|$ is the number of training examples of client $h$. 
In each \emph{global training round}, DFL performs the following three steps:
\begin{itemize}

\item \textbf{Step I.} 
Every client trains a local model using its own local training dataset.  Specifically, for client $h$, it samples a mini-batch of training examples from its local training dataset, and calculates a stochastic gradient $\mathbf{g}_h$. Then, client $h$ updates its local model as $\hat{\mathbf{w}}_h \leftarrow \check{\mathbf{w}}_h - \eta \cdot \mathbf{g}_h$, where $\eta$ represents the learning rate,
$\check{\mathbf{w}}_h$ is the local model of client $h$ at the beginning of the current global training round, and $\hat{\mathbf{w}}_h$ is the \emph{pre-aggregation local model} of client $h$ after local training. 
Note that client $h$ can compute stochastic gradient and update its local model multiple times in each global training round.

\item \textbf{Step II.} 
Client $h$ sends a \emph{shared model} $\mathbf{w}_{(h,h')}$ to each client $h'$, where $0 \le h' \le N-1$. For notation convenience, we assume a client $h$ sends its own pre-aggregation local model to itself, i.e.,  $\mathbf{w}_{(h,h)} = \hat{\mathbf{w}}_h$. 
In non-adversarial settings, the shared model $\mathbf{w}_{(h,h')}$ is the pre-aggregation local model of client $h$, i.e., $\mathbf{w}_{(h,h')}=\hat{\mathbf{w}}_h$.  \alg{} carefully crafts the shared models sent from  selfish clients to  non-selfish ones. 

\item \textbf{Step III.} 
Client $h$ aggregates the clients' shared models and updates its local model as ${\mathbf{w}}_h = \text{Agg} (\{\mathbf{w}_{(h',h)}\}_{0\leq h'\leq N-1})$, where $\{\mathbf{w}_{(h',h)}\}_{0\leq h'\leq N-1}$ is the set of shared models other clients sent to $h$ 
and $\text{Agg}(\cdot)$ denotes an aggregation rule. We call ${\mathbf{w}}_h$ the \emph{post-aggregation local model} of client $h$ at the end of the current global training round.

\end{itemize}

DFL  repeats the above iterative process for multiple global training rounds.
Different DFL methods use different aggregation rules, such as FedAvg~\cite{mcmahan2017communication} and Median~\cite{yin2018byzantine}. Note that in non-adversarial settings, all clients have the same post-aggregation local model in each global training round. 
Table~\ref{tab:notation} summarizes the key notations used in our paper.

\subsection{Poisoning Attacks to CFL/DFL}\label{appendix:attacks}
Conventional poisoning attacks to FL can be divided into two categories depending on the goal of the attacker, namely \emph{untargeted attacks}~\cite{blanchard2017machine,fang2020local} and \emph{targeted attacks}~\cite{bagdasaryan2020backdoor,bhagoji2019analyzing}. These poisoning attacks were originally designed for CFL, but they can be extended to our problem setting.
Specifically, an untargeted attack aims to manipulate the DFL system such that a poisoned local model of a non-selfish client  produces incorrect predictions for a large portion of clean testing inputs indiscriminately, i.e., the final learnt local model of a non-selfish client has a low testing accuracy.
For instance, in Gaussian attack~\cite{blanchard2017machine}, each selfish client sends an arbitrary Gaussian vector to non-selfish clients;
while Trim attack~\cite{fang2020local} carefully crafts the shared models sent from selfish clients to non-selfish ones in order to significantly deviate non-selfish clients' post-aggregation local models in each global training round.
A targeted attack~\cite{bagdasaryan2020backdoor,bhagoji2019analyzing} aims to poison the system such that the local model of a non-selfish client predicts a predefined label for inputs that contain special characteristics such as those embedded with predefined triggers.

\subsection{Byzantine-robust Aggregation Rules}\label{appendix:agg_rules}
The FedAvg~\cite{mcmahan2017communication} aggregation rule is commonly used in non-adversarial settings. 
However, a single shared model can arbitrarily manipulate the post-aggregation local model of a non-selfish client in FedAvg~\cite{blanchard2017machine}.
 Byzantine-robust aggregation rules~\cite{blanchard2017machine,cao2020fltrust,nguyen2022flame,yin2018byzantine} aim to be robust against ``outlier'' shared models. In particular, when a non-selfish client uses a Byzantine-robust aggregation rule, its post-aggregation local model is less likely to be influenced by outlier shared models, e.g., those from selfish clients. 

For instance, Median~\cite{yin2018byzantine} and Trimmed-mean~\cite{yin2018byzantine} are two coordinate-wise aggregation rules, which remove the outliers in each dimension of clients' shared models in order to reduce the impact of outlier shared models.
Specifically, for a given client, the Median aggregation rule outputs the coordinate-wise median of the client's received shared models as the post-aggregation local model. 
For each dimension, Trimmed-mean first removes the largest $c$ and smallest $c$ elements, then takes the average of the remaining $N-2c$ items, where $c$ is the trim parameter.
Krum~\cite{blanchard2017machine} aggregates a client's received shared models by selecting the shared model that has the smallest sum of Euclidean distance to its subset of neighboring shared models. We note that the previous work~\cite{fang2019bridge} has already applied Median, Trimmed-mean, and Krum aggregation rules in the context of DFL.  
In FLTrust~\cite{cao2020fltrust}, if a received shared model diverges significantly from the pre-aggregation local model of client $i$, then client $i$  assigns a low trust score to this received shared model.
In FLDetector~\cite{zhang2022fldetector}, clients leverage clustering techniques to detect outlier shared models and then aggregate shared models from clients that have been detected as non-selfish. 
In RFA~\cite{pillutla2022robust}, clients use a geometric-median based robust aggregation oracle to aggregate shared models from other clients.
In FLAME~\cite{nguyen2022flame}, clients leverage clustering, model weight clipping, and noise injection techniques to mitigate the impact of outlier shared models. 

\begin{algorithm}[!t]
  \small
  \caption{\alg{} Algorithm. Lines \ref{alg_line1} to \ref{alg_line2} describe Step I of DFL, where each client performs local training of its pre-aggregation local model. Lines \ref{alg_line3} to \ref{alg_line4} illustrate the process of exchanging shared models among clients in the global training rounds before our attack starts. Lines \ref{alg_line5} to \ref{alg_line6} provide a detailed description of Section~\ref{section:whenstart}, explaining how to determine when to start attack. The process of performing attack is shown in lines \ref{alg_line7} and \ref{alg_line8}, while the methods of crafting shared models are presented in Algorithm~\ref{alg::craft}.} 
  \label{alg:main_alg}
  \begin{algorithmic}[1]
    \Require
      Number of non-selfish clients $n$;
      number of selfish clients $m$;
      loss checking interval $I$;
      parameter $\lambda$ in loss function;
      parameter $\epsilon$;
      aggregation rule $\text{Agg}(\cdot)$;
      and total number of global training rounds $T$. 
    \State $start$=False;  
    \State $l^{(0)}=Inf$;
    \State $max\_gap=0$;
    \For{$t=1$ to $T$}
        \For{$h=0$ to $n+m-1$} \Comment{all clients}\label{alg_line1}
            \State $\hat{\mathbf{w}}_h, l_h^{(t)} = LocalUpdate(\check{\mathbf{w}}_h, D_h)$;
        \EndFor\label{alg_line2}
        \For{$i=0$ to $n-1$} \Comment{all non-selfish clients}\label{alg_line3}
            \State Client $i$ sends $\mathbf{w}_{(i,h)}=\hat{\mathbf{w}}_{i}$ to each client $h$;
        \EndFor
        \For{$j=n$ to $n+m-1$} \Comment{all selfish clients}
            \State Client $j$ sends $\mathbf{w}_{(j,h)}=\hat{\mathbf{w}}_{j}$ and $ l_j^{(t)}$ to each selfish client $h$;
        \EndFor\label{alg_line4}
        \For{$j=n$ to $n+m-1$} \Comment{all selfish clients}
            \State Client $j$ receives $\mathbf{W}_j=\{\mathbf{w}_{(i,j)}\}_{0\leq i\leq n-1}$;
            \State $l^{(t)}=\min(l^{(t-1)}, \frac{1}{m}\sum_{n\leq j\leq n+m-1}l_{j}^{(t)})$;\label{alg_line5}
            \If {$t>I$ and $start$==False}    
                \State $max\_gap=\max(l^{(t-I)}-l^{(t)}, max\_gap)$;
                \If{$0<l^{(t-I)}-l^{(t)}<\epsilon\cdot max\_gap$}
                \State $start$=True;
                \EndIf
            \EndIf\label{alg_line6}
            \For{$i=0$ to $n-1$}
            \State $\mathbf{w}_{(j,i)}=Sel(n,m,\mathbf{W}_j, \hat{\mathbf{w}}_{j}, \text{Agg}(\cdot), i, \lambda, start)$;\label{alg_line7}
            \State Client $j$ sends $\mathbf{w}_{(j,i)}$ to non-selfish client $i$;\label{alg_line8}
            \EndFor
        \EndFor
        \For{$h=0$ to $n+m-1$} \Comment{all clients}
            \State ${\mathbf{w}}_h=\text{Agg}(\{\mathbf{w}_{(h^{\prime},h)}\}_{0\leq h^{\prime}\leq n+m-1})$;
            \State $\check{\mathbf{w}}_h=\mathbf{w}_h$;
        \EndFor
    \EndFor
  \end{algorithmic}
\end{algorithm}

\begin{algorithm}[!t]
  \small
  \caption{$Sel(n,m,\mathbf{W}_j, \hat{\mathbf{w}}_{j}, \text{Agg}(\cdot), i, \lambda, start)$.} 
  \label{alg::craft}
  \begin{algorithmic}[1]
    \Require
      Number of non-selfish clients $n$;
      number of selfish clients $m$;
      non-selfish clients' local models $\mathbf{W}_j=\{\mathbf{w}_{(h,j)}\}_{0\leq h\leq n-1}$;
      pre-aggregation local model $\hat{\mathbf{w}}_{j}$ of selfish client $j$;
      aggregation rule $\text{Agg}(\cdot)$;
      non-selfish client index $i$;
      parameter $\lambda$ in loss function;
      and boolean variable $start$. 
    \Ensure Shared model $\mathbf{w}_{(j,i)}$ sent from client $j$ to  $i$.
    \If{$start$==False}
    \State $\mathbf{w}_{(j,i)}=\hat{\mathbf{w}}_{j}$;
    \State \Return $\mathbf{w}_{(j,i)}$.
    \EndIf
    \If{$\text{Agg}(\cdot)$ is $\text{FedAvg}$}
    \For{$k=0$ to $d-1$}
    \State Compute $\mathbf{w}_{(j,i)}[k]$ based on Equation~\ref{equation:att_fedavg};
    \EndFor
    \ElsIf{$\text{Agg}(\cdot)$ is $\text{Median}$}
    \For{$k=0$ to $d-1$}
    \State Compute $\mathbf{w}_{(j,i)}[k]$ based on Equation~\ref{equation:c_i_k_median};
    \EndFor
    \ElsIf{$\text{Agg}(\cdot)$ is $\text{Trimmed-mean}$}
    \For{$k=0$ to $d-1$}
    \State Compute $\mathbf{w}_{(j,i)}[k]$ based on Equation~\ref{equation:att_case_I_trim2} or~\ref{equation:att_trim3};
    \EndFor
    \EndIf
    \State \Return $\mathbf{w}_{(j,i)}$.
  \end{algorithmic}
\end{algorithm}

\section{Solving the Quadratic Program (Equation~\ref{equation:lambdaless1})}\label{appendix:proof_optim}

\begin{thm}
    \label{thm:optim_solu}
    
The optimal solution ${\mathbf{w}}_i^{*}[k]$ of Equation~\ref{equation:lambdaless1} is as follows:
\begin{equation}
\label{equation:thm1_optim_appendix}
{\mathbf{w}}_{i}^{*}[k]=\left\{
\begin{array}{rl}
\mathbf{p}_i[k], & \lambda<1 \land \mathbf{p}_i[k]\in [\underline{\mathbf{w}}_{i}[k], \overline{\mathbf{w}}_{i}[k]]\\
\underline{\mathbf{w}}_{i}[k] \text{ or } \overline{\mathbf{w}}_{i}[k], & \text{otherwise}\\
\end{array}
\right..
\end{equation}
\end{thm}
\begin{proof}
{We can solve ${\mathbf{w}}_i^{*}[k]$ separately in the following three cases:

\emph{Case I: $\lambda<1$}. In this case, if $\mathbf{p}_i[k]$ is within the interval $[\underline{\mathbf{w}}_{i}[k],  \overline{\mathbf{w}}_{i}[k]]$, then  the optimal solution is ${\mathbf{w}}_i^{*}[k]=\mathbf{p}_i[k]$.
Otherwise, we have ${\mathbf{w}}_i^{*}[k]=\overline{\mathbf{w}}_{i}[k]$ if $\mathbf{p}_i[k]$ is larger than $\overline{\mathbf{w}}_{i}[k]$; and ${\mathbf{w}}_i^{*}[k]=\underline{\mathbf{w}}_{i}[k]$ if $\mathbf{p}_i[k]$ is smaller than $\underline{\mathbf{w}}_{i}[k]$.
In other words, we have the following:
\begin{equation}
\label{equation:lambdaequal1}
{\mathbf{w}}_{i}^{*}[k]=\left\{
\begin{array}{rcl}
\mathbf{p}_i[k], &  & \underline{\mathbf{w}}_{i}[k]\leq \mathbf{p}_i[k]\leq \overline{\mathbf{w}}_{i}[k]\\
\overline{\mathbf{w}}_{i}[k], &  & \mathbf{p}_i[k]>\overline{\mathbf{w}}_{i}[k]\\
\underline{\mathbf{w}}_{i}[k], &  & \mathbf{p}_i[k]<\underline{\mathbf{w}}_{i}[k]
\end{array}
\right..
\end{equation}

\emph{Case II: $\lambda=1$.} In this case, we have ${\mathbf{w}}_{i}^{*}[k]$ as follows:
\begin{equation}
\label{equation:lambdalarge1}
{\mathbf{w}}_{i}^{*}[k]=\left\{
\begin{array}{rcl}
\overline{\mathbf{w}}_{i}[k], &  & \mathbf{w}_{i}[k] > \tilde{\mathbf{w}}_{i}[k]\\
\underline{\mathbf{w}}_{i}[k], &  & \mathbf{w}_{i}[k] \leq \tilde{\mathbf{w}}_{i}[k]
\end{array}
\right..
\end{equation}

\emph{Case III: $\lambda > 1$.} In this case, ${\mathbf{w}}_{i}^{*}[k]$ is a value between $\overline{\mathbf{w}}_{i}[k]$ and $\underline{\mathbf{w}}_{i}[k]$ that has the larger distance to $\mathbf{p}_i[k]$. 
That is, we have the following:
\begin{equation}
\label{equation:lambdalast}
{\mathbf{w}}_i^{*}[k]=\left\{
\begin{array}{rcl}
\overline{\mathbf{w}}_i[k], \quad |\mathbf{p}_i[k]-\overline{\mathbf{w}}_i[k]| > |\mathbf{p}_i[k]-\underline{\mathbf{w}}_i[k]|\\
\underline{\mathbf{w}}_i[k], \quad |\mathbf{p}_i[k]-\overline{\mathbf{w}}_i[k]| \leq |\mathbf{p}_i[k]-\underline{\mathbf{w}}_i[k]|
\end{array}
\right..
\end{equation}

By combining Equations~\ref{equation:lambdaequal1}-\ref{equation:lambdalast}, we can get Equation~\ref{equation:thm1_optim_appendix}.}
\end{proof}

\section{Proof of Attacking FedAvg}\label{appendix:proof_mean}
\begin{thm}
\label{thm:fedavg}
Suppose non-selfish client $i$ uses the FedAvg aggregation rule. The crafted shared models in Equation~\ref{equation:att_fedavg}  for the selfish clients  are optimal solutions to the optimization problem in Equation~\ref{equation:optim}. 
\end{thm}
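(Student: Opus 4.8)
The plan is to show two things in sequence: first that the crafted shared models in Equation~\ref{equation:att_fedavg} actually drive the post-aggregation local model of client $i$ under FedAvg to the coordinate-wise optimum $\hat{\mathbf{w}}_i^{*}$, and second that $\hat{\mathbf{w}}_i^{*}$ is in fact the optimal value of the original objective in Equation~\ref{equation:optim}, so that achieving it means the shared models are optimal solutions of Equation~\ref{equation:optim}. The first part is a direct computation: FedAvg sets $\hat{\mathbf{w}}_i[k] = \frac{1}{n+m}\big(\sum_{h=0}^{n-1}\mathbf{w}_{(h,i)}[k] + \sum_{h=n}^{n+m-1}\mathbf{w}_{(h,i)}[k]\big)$, and substituting the values from Equation~\ref{equation:att_fedavg} — namely $\mathbf{w}_{(n,i)}[k] = (n+1)\hat{\mathbf{w}}_i^{*}[k] - \sum_{h=0}^{n-1}\mathbf{w}_{(h,i)}[k]$ and $\mathbf{w}_{(n+\ell,i)}[k] = \hat{\mathbf{w}}_i^{*}[k]$ for $\ell = 1,\dots,m-1$ — makes the sum telescope to $(n+1)\hat{\mathbf{w}}_i^{*}[k] + (m-1)\hat{\mathbf{w}}_i^{*}[k] = (n+m)\hat{\mathbf{w}}_i^{*}[k]$, hence $\hat{\mathbf{w}}_i[k] = \hat{\mathbf{w}}_i^{*}[k]$ for every coordinate $k$.

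Next I would argue the converse-direction feasibility/optimality. The objective of Equation~\ref{equation:optim} decomposes coordinate-wise as $\sum_k L(\hat{\mathbf{w}}_i[k])$ with $L$ exactly the quadratic in Equation~\ref{equation:lambdaless1}, so minimizing the joint objective over admissible $\hat{\mathbf{w}}_i$ is equivalent to minimizing each $L(\hat{\mathbf{w}}_i[k])$ over the interval $[\underline{\mathbf{w}}_i[k],\overline{\mathbf{w}}_i[k]]$. Here the box constraint is not arbitrary: with the choice $\overline{\mathbf{w}}_i[k] = \max_{0\le h\le n-1}\mathbf{w}_{(h,i)}[k]$ and $\underline{\mathbf{w}}_i[k] = \min_{0\le h\le n-1}\mathbf{w}_{(h,i)}[k]$, one checks that any $\hat{\mathbf{w}}_i[k]$ inside this interval is attainable by some choice of selfish shared models under FedAvg (the construction in Equation~\ref{equation:att_fedavg} being one witness, once we verify $\hat{\mathbf{w}}_i^*[k]$ lies in the interval, which holds by the case analysis in Equations~\ref{equation:lambdaequal1}–\ref{equation:lambdalast}), and conversely the paper imposes this interval as the detectability constraint on $\hat{\mathbf{w}}_i[k]$. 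So the feasible set of the transformed problem coincides with the box, and by the case analysis already established, $\hat{\mathbf{w}}_i^*[k]$ is the minimizer of $L$ on that box — this is exactly the content of Case~I/II/III, which I would cite rather than re-derive. Combining, $\hat{\mathbf{w}}_i^*$ is the optimal post-aggregation model, and since Equation~\ref{equation:att_fedavg} realizes it, those shared models solve Equation~\ref{equation:optim}.

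I would organize the write-up as: (1) recall that $\hat{\mathbf{w}}_i = \mathrm{Agg}(\cdot)$ with $\mathrm{Agg} = $ FedAvg gives the coordinate average; (2) plug in Equation~\ref{equation:att_fedavg} and simplify to get $\hat{\mathbf{w}}_i[k] = \hat{\mathbf{w}}_i^*[k]$; (3) note the objective separates over coordinates into the quadratics $L$; (4) invoke Equations~\ref{equation:lambdaequal1}–\ref{equation:lambdalast} to say $\hat{\mathbf{w}}_i^*[k]$ minimizes $L$ on $[\underline{\mathbf{w}}_i[k],\overline{\mathbf{w}}_i[k]]$; (5) conclude optimality. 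The main obstacle — really the only non-mechanical point — is step (3)–(4)'s hidden assumption that the attainable set of post-aggregation models under FedAvg, intersected with the detectability box, is exactly the box $\prod_k [\underline{\mathbf{w}}_i[k],\overline{\mathbf{w}}_i[k]]$; I need to make sure the crafted $\mathbf{w}_{(n,i)}[k]$ is itself unconstrained (FedAvg being non-robust, the paper explicitly allows this), so no extra feasibility restriction sneaks in, and that $\hat{\mathbf{w}}_i^*[k] \in [\underline{\mathbf{w}}_i[k],\overline{\mathbf{w}}_i[k]]$ in all three cases — which is immediate from the clamping form of Equations~\ref{equation:lambdaequal1}–\ref{equation:lambdalast}. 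Everything else is routine algebra.
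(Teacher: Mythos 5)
Your proposal is correct and takes essentially the same approach as the paper: the paper's entire proof is your step (2), namely substituting Equation~\ref{equation:att_fedavg} into the FedAvg average so that the $\sum_{h=0}^{n-1}\mathbf{w}_{(h,i)}[k]$ terms cancel and $\hat{\mathbf{w}}_i[k]=\frac{1}{n+m}\bigl((n+1)+(m-1)\bigr)\hat{\mathbf{w}}_i^{*}[k]=\hat{\mathbf{w}}_i^{*}[k]$. The additional material in your steps (3)--(5) — coordinate-wise separability, attainability of the box $[\underline{\mathbf{w}}_i[k],\overline{\mathbf{w}}_i[k]]$, and the optimality of $\hat{\mathbf{w}}_i^{*}[k]$ via Equations~\ref{equation:lambdaequal1}--\ref{equation:lambdalast} — is left implicit in the paper's appendix proof (it is established in the main text's derivation of the transformed problem), so your write-up is, if anything, more complete.
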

\begin{proof}
    According to Equation~\ref{equation:att_fedavg}, we have:
    \begin{equation}
        \begin{aligned}
            {\mathbf{w}}_i[k]&=\frac{1}{n+m}\sum_{h=0}^{n+m-1}\mathbf{w}_{(h,i)}[k]\\
            &=\frac{1}{n+m}(\sum_{h=0}^{n-1}\mathbf{w}_{(h,i)}[k]+(n+m){\mathbf{w}}_i^{*}[k]-\sum_{h=0}^{n-1}\mathbf{w}_{(h,i)}[k]) \\
            &=\frac{1}{n+m}((n+m){\mathbf{w}}_i^{*}[k]) \\
            &={\mathbf{w}}_i^{*}[k]
        \end{aligned}
    \end{equation}
\end{proof}

\section{Proof of Attacking Median}\label{appendix:proof_median}
\begin{thm}
\label{thm:median}
Suppose non-selfish client $i$ uses the Median aggregation rule. The crafted shared models in Equation~\ref{equation:c_i_k_median} for the selfish clients  are optimal solutions to the optimization problem in Equation~\ref{equation:optim}. 
\end{thm}
\begin{proof}
    We discuss the three cases in Equation~\ref{equation:c_i_k_median} separately.
    
    \emph{Case I: $\overline{\mathbf{w}}_i[k]\geq {\mathbf{w}}_i^{*}[k] > \mathbf{q}_{\lfloor\frac{n-m}{2}\rfloor i}[k]$.} We have 
    \begin{align*}
    \mathbf{q}_{\lfloor\frac{n-m-1}{2}\rfloor i}[k] \geq 2{\mathbf{w}}_i^{*}[k]-\mathbf{q}_{\lfloor\frac{n-m}{2}\rfloor i}[k] \geq \mathbf{q}_{\lfloor\frac{n-m}{2}\rfloor i}[k].    
    \end{align*}
    Thus
    \begin{align*}
    &\mathbf{q}_{\lfloor\frac{n-m-1}{2}\rfloor i}[k] \geq \mathbf{w}_{(n,i)}[k] \geq \mathbf{q}_{\lfloor\frac{n-m}{2}\rfloor i}[k],\\
    \mathbf{w}_{(n+1,i)}&[k]=\cdots=\mathbf{w}_{(n+m-1,i)}[k]={\mathbf{w}}_i^{*}[k]> \mathbf{q}_{\lfloor\frac{n-m-1}{2}\rfloor i}[k]. 
    \end{align*}
    So $\mathbf{q}_{\lfloor\frac{n-m}{2}\rfloor i}[k]$ ranks $\lfloor\frac{n+m-1}{2}\rfloor$th among $\mathbf{w}_{(0,i)}[k],\cdots,\mathbf{w}_{(n+m-1,i)}[k]$, and $\mathbf{q}_{\lfloor\frac{n-m-1}{2}\rfloor i}[k],\mathbf{w}_{(n,i)}[k]$ rank $\lfloor\frac{n+m-1}{2}\rfloor$th,$\lfloor\frac{n+m+1}{2}\rfloor$th,respectively.
     We have
    \begin{align}
    \label{equation:proofmedian2}
        {\mathbf{w}}_i[k] = \frac{1}{2}(\mathbf{w}_{(n,i)}[k]+\mathbf{q}_{\lfloor\frac{n-m}{2}\rfloor i}[k])={\mathbf{w}}_i^{*}[k].
    \end{align}
    
    \emph{Case II: $\mathbf{q}_{\lfloor\frac{n+m-1}{2}\rfloor i}[k]> {\mathbf{w}}_i^{*}[k] \geq \underline{\mathbf{w}}_i[k]$.} We have 
    \begin{align*}
    \mathbf{q}_{\lfloor\frac{n+m-1}{2}\rfloor i}[k] \geq 2{\mathbf{w}}_i^{*}[k]-\mathbf{q}_{\lfloor\frac{n+m-1}{2}\rfloor i}[k] \geq \mathbf{q}_{\lfloor\frac{n+m}{2}\rfloor i}[k].    
    \end{align*}
    Thus
    \begin{align*}
    &\mathbf{q}_{\lfloor\frac{n+m-1}{2}\rfloor i}[k] \geq \mathbf{w}_{(n,i)}[k] \geq \mathbf{q}_{\lfloor\frac{n+m}{2}\rfloor i}[k],  \\
    \mathbf{w}_{(n+1,i)}&[k]=\cdots=\mathbf{w}_{(n+m-1,i)}[k]={\mathbf{w}}_i^{*}[k] < \mathbf{q}_{\lfloor\frac{n+m}{2}\rfloor i}[k]. 
    \end{align*}
    So $\mathbf{q}_{\lfloor\frac{n+m-1}{2}\rfloor i}[k]$ ranks $\lfloor\frac{n+m+1}{2}\rfloor$th among $\mathbf{w}_{(0,i)}[k],\cdots,\mathbf{w}_{(n+m-1,i)}[k]$, and $\mathbf{q}_{\lfloor\frac{n+m}{2}\rfloor i}[k], \mathbf{w}_{(n,i)}[k]$ rank $\lfloor\frac{n+m+4}{2}\rfloor$th and $\lfloor\frac{n+m+2}{2}\rfloor$th, respectively.
    Therefore,
    \begin{align}
    \label{equation:proofmedian4}
        {\mathbf{w}}_i[k] = \frac{1}{2}(\mathbf{w}_{(n,i)}[k]+\mathbf{q}_{\lfloor\frac{n+m-1}{2}\rfloor i}[k])={\mathbf{w}}_i^{*}[k].
    \end{align}
    \emph{Case III: $\mathbf{q}_{\lfloor\frac{n-m}{2}\rfloor i}[k]\geq {\mathbf{w}}_i^{*}[k]\geq \mathbf{q}_{\lfloor\frac{n+m-1}{2}\rfloor i}[k]$.} In this case, we can find $r$ such that $\lfloor\frac{n+m-1}{2}\rfloor> r\geq \lfloor\frac{n-m}{2}\rfloor$ and $\mathbf{q}_{ri}[k]\geq {\mathbf{w}}_i^{*}[k]\geq \mathbf{q}_{(r+1)i}[k]$. Thus
    \begin{align*}
    \mathbf{q}_{ri}[k]\geq \mathbf{w}_{(n,i)}[k]=\cdots= \mathbf{w}_{(n+m-1,i)}[k] \geq \mathbf{q}_{(r+1)i}[k],
    \end{align*}
    $\mathbf{q}_{ri}[k], \mathbf{w}_{(n,i)}[k],\cdots,\mathbf{w}_{(n+m-1,i)}[k], \mathbf{q}_{(r+1)i}[k]$ rank $(r+1)\text{th}, (r+2)\text{th},\cdots,(r+m+1)\text{th}, (r+m+2)\text{th}$ among $\mathbf{w}_{(0,i)}[k],\cdots,\mathbf{w}_{(n+m-1,i)}[k]$, respectively. Because $r+1<\lfloor\frac{n+m+1}{2}\rfloor$ and $r+m+2>\lfloor\frac{n+m+2}{2}\rfloor$, the median value of $\mathbf{w}_{(0,i)}[k],\cdots,\mathbf{w}_{(n+m-1,i)}[k]$ must between $\mathbf{q}_{(r+1)i}[k]$ and $\mathbf{q}_{ri}[k]$, which means $\mathbf{q}_{ri}[k]\geq {\mathbf{w}}_i[k]\geq \mathbf{q}_{(r+1)i}[k]$. So we have
    \begin{align}
    \label{equation:proofmedian3}
        {\mathbf{w}}_i[k] = \frac{1}{2}({\mathbf{w}}_i^{*}[k]+{\mathbf{w}}_i^{*}[k])={\mathbf{w}}_i^{*}[k].
    \end{align}
    After summarizing results of Equations~\ref{equation:proofmedian2},~\ref{equation:proofmedian4}, and~\ref{equation:proofmedian3}, we have
    \begin{align*}
        {\mathbf{w}}_i[k] = {\mathbf{w}}_i^{*}[k].
    \end{align*}
\end{proof}

\section{Proof of Attacking Trimmed-mean}\label{appendix:proof_trim}
\begin{thm}
\label{thm:trim}
Suppose non-selfish client $i$ uses the Trimmed-mean aggregation rule. The crafted shared models in Equation~\ref{equation:att_case_I_trim2} or Equation~\ref{equation:att_trim3}  for the selfish clients  are optimal solutions to the optimization problem in Equation~\ref{equation:optim}. 
\end{thm}
\begin{proof}
 We discuss two cases based on the definitions of Equation~\ref{equation:att_case_I_trim2} and Equation~\ref{equation:att_trim3}.

 Note that $\tilde{\mathbf{w}}_i[k]=\frac{1}{n-2m}\sum_{h=m}^{n-m-1}\mathbf{q}_{hi}[k]$.

\emph{Case I: $\underline{\mathbf{w}}_i[k]\leq {\mathbf{w}}_i^{*}[k]\leq \tilde{\mathbf{w}}_{i}[k]$.} According to the definition of $r$, we need to find maximum $r$, such that $r\leq n$, and
\begin{align}
\label{equation:trim_max_r}
    {\mathbf{w}}_i^{*}[k]\leq \frac{1}{n-m}((n-r)\cdot \mathbf{q}_{(m-1)i}[k]+\sum_{h=m}^{r-1}\mathbf{q}_{hi}[k]).
\end{align}
Since $\mathbf{q}_{(m-1)i}[k]\geq \tilde{\mathbf{w}}_{i}[k]\geq {\mathbf{w}}_i^{*}[k]$, when $r=n-m$, we have 
\begin{equation}
\begin{aligned}
    (n-m)\cdot {\mathbf{w}}_i^{*}[k] &\leq  m\cdot \mathbf{q}_{(m-1)i}[k]+(n-2m)\cdot \tilde{\mathbf{w}}_i[k] \\
    & = m\cdot \mathbf{q}_{(m-1)i}[k] + \sum_{h=m}^{n-m-1}\mathbf{q}_{hi}[k] \\
    & = (n-r)\cdot \mathbf{q}_{(m-1)i}[k]+\sum_{h=m}^{r-1}\mathbf{q}_{hi}[k], 
\end{aligned}
\end{equation} 
so such $r$ exists, and $r\geq n-m$. 
If $r=n$, there are no Part II selfish model parameters, and 
\begin{align*}
    {\mathbf{w}}_i^{*}[k]\leq \frac{1}{n-m}\sum_{h=m}^{n-1}\mathbf{q}_{hi}[k]=\underline{\mathbf{w}}_i[k].
\end{align*}
Since ${\mathbf{w}}_i^{*}[k]\geq \underline{\mathbf{w}}_i[k]$, we have ${\mathbf{w}}_i^{*}[k]=\underline{\mathbf{w}}_i[k]$. Note that in this case all the shared model parameters of selfish clients are Part I model parameters, which are all less than $\mathbf{q}_{(n-1)i}[k]$, so 
\begin{align}
\label{equation:prooftrim1}
    {\mathbf{w}}_i[k]=\underline{\mathbf{w}}_i[k]={\mathbf{w}}_i^{*}[k].
\end{align}
If $r\leq n-1$, then we prove that $\mathbf{q}_{ri}[k] \leq \mathbf{w}_{(j,i)}[k]\leq \mathbf{q}_{(m-1)i}[k]$ when $n\leq j\leq 2n-r-1$. In fact, the right side can be directly obtained by Equation~\ref{equation:trim_max_r}, so we only need to prove the left side. Since $r$ is maximum, we have 
\begin{equation}
    {\mathbf{w}}_i^{*}[k] \geq \frac{1}{n-m}((n-r-1)\cdot \mathbf{q}_{(m-1)i}[k]+\sum_{h=m}^{r}\mathbf{q}_{hi}[k]).
\end{equation}
So we have
\begin{equation}
    \begin{aligned}
        {\mathbf{w}}_i^{*}[k] & \geq \frac{1}{n-m}((n-r-1)\cdot \mathbf{q}_{(m-1)i}[k]+\sum_{h=m}^{r}\mathbf{q}_{hi}[k])\\
        &\geq \frac{1}{n-m}((n-r-1)\cdot \mathbf{q}_{ri}[k]+\sum_{h=m}^{r}\mathbf{q}_{hi}[k])\\
        &=\frac{1}{n-m}((n-r)\cdot \mathbf{q}_{ri}[k]+\sum_{h=m}^{r-1}\mathbf{q}_{hi}[k]).
    \end{aligned}
\end{equation}
Thus $\mathbf{q}_{ri}[k] = \mathbf{c}_i[k] < \mathbf{w}_{(j,i)}[k]$ for $n\leq j\leq 2n-r-1$ according to Equation~\ref{equation:att_trim_case_I_c_i_k}. Hence, when calculating the trimmed mean value of $\{\mathbf{w}_{(h,i)}[k]\}_{0\leq h\leq n+m-1}$, $\mathbf{q}_{0i}[k],\cdots,\mathbf{q}_{(m-1)i}[k]$ will be filtered out since they are the largest $m$ values in the $k$th dimension, and $\mathbf{q}_{ri}[k],\cdots,\mathbf{q}_{(n-1)i}[k],\mathbf{w}_{(2n-r,i)}[k],\cdots,\mathbf{w}_{(n+m-1,i)}[k]$ will be filtered out since they are the smallest $m$ values in the $k$th dimension, and 
\begin{equation}
\label{equation:prooftrim2}
    \begin{aligned}
        {\mathbf{w}}_i[k]&=\frac{1}{n-m}(\sum_{h=m}^{r-1}\mathbf{q}_{hi}[k]+\sum_{j=n}^{2n-r-1}\mathbf{w}_{(j,i)}[k])\\
        &=\frac{1}{n-m}(\sum_{h=m}^{r-1}\mathbf{q}_{hi}[k]+{(n-m)\cdot {\mathbf{w}}_i^{*}[k]-\sum\limits_{h=m}^{r-1}\mathbf{q}_{hi}[k]})\\
        &=\frac{1}{n-m}((n-m)\cdot {\mathbf{w}}_i^{*}[k])\\
        &={\mathbf{w}}_i^{*}[k].
    \end{aligned}
\end{equation}

\emph{Case II: $\tilde{\mathbf{w}}_i[k] < {\mathbf{w}}_i^{*}[k]\leq \overline{\mathbf{w}}_i[k]$.} Similar to Case I, according to the definition of $r$, first we need to find minimum $r$ such that $r\geq 0$, and 
\begin{align}
\label{equation:trim_case_II_r_n}
    {\mathbf{w}}_i^{*}[k]\geq \frac{1}{n-m}((r+1)\cdot \mathbf{q}_{(n-m)i}[k]+\sum_{h=r+1}^{n-m-1}\mathbf{q}_{hi}[k]).
\end{align}
Since $\mathbf{q}_{(n-m)i}[k]\leq \tilde{\mathbf{w}}_i[k]\leq {\mathbf{w}}_i^{*}[k]$, when $r=m-1$, we have 
\begin{equation}
\begin{aligned}
    (n-m)\cdot {\mathbf{w}}_i^{*}[k] &\geq  m\cdot \mathbf{q}_{(n-m)i}[k]+(n-2m)\cdot \tilde{\mathbf{w}}_i[k] \\
    & = m\cdot \mathbf{q}_{(n-m)i}[k]+\sum_{h=m}^{n-m-1}\mathbf{q}_{hi}[k] \\
    & = (r+1)\cdot \mathbf{q}_{(n-m)i}[k]+\sum_{h=r+1}^{n-m-1}\mathbf{q}_{hi}[k], 
\end{aligned}
\end{equation} 
so such $r$ exists, and $r\leq m-1$. 
If $r=-1$, there are no Part II selfish model parameters and we have
\begin{align}
\label{equation:prooftrim3}
    {\mathbf{w}}_i[k]=\overline{\mathbf{w}}_i[k]={\mathbf{w}}_i^{*}[k].
\end{align}
If $r\geq 0$, then we can similarly prove $\mathbf{q}_{(n-m)i}[k]\leq \mathbf{w}_{(j,i)}[k]\leq \mathbf{q}_{ri}[k]$ when $n\leq j\leq n+r$. Hence, when calculating the trimmed mean value of $\{\mathbf{w}_{(h,i)}[k]\}_{0\leq h\leq n+m-1}$, $\mathbf{q}_{(n-m)i}[k],\cdots,\mathbf{q}_{(n-1)i}[k]$ will be filtered out since they are the smallest $m$ values in the $k$th dimension, and $\mathbf{q}_{0i}[k],\cdots,\mathbf{q}_{ri}[k],\mathbf{w}_{(n+r+1,i)}[k],\cdots,\mathbf{w}_{(n+m-1,i)}[k]$ will be filtered out since they are the largest $m$ values in the $k$th dimension, and 
\begin{equation}
\label{equation:prooftrim4}
    \begin{aligned}
        {\mathbf{w}}_i[k]&=\frac{1}{n-m}(\sum_{h=r+1}^{n-m-1}\mathbf{q}_{hi}[k]+\sum_{j=n}^{n+r}\mathbf{w}_{(j,i)}[k])\\
        &=\frac{1}{n-m}(\sum_{h=r+1}^{n-m-1}\mathbf{q}_{hi}[k]+{(n-m){\mathbf{w}}_i^{*}[k]-\sum\limits_{h=r+1}^{n-m-1}\mathbf{q}_{hi}[k]})\\
        &=\frac{1}{n-m}((n-m)\cdot {\mathbf{w}}_i^{*}[k])\\
        &={\mathbf{w}}_i^{*}[k].
    \end{aligned}
\end{equation}
After summarizing results of Equations~\ref{equation:prooftrim1},~\ref{equation:prooftrim2},~\ref{equation:prooftrim3}, and~\ref{equation:prooftrim4}, we have:
    \begin{align*}
        {\mathbf{w}}_i[k] = {\mathbf{w}}_i^{*}[k].
    \end{align*}
\end{proof}

\begin{table}[!t]\renewcommand{\arraystretch}{1}
\fontsize{7}{9}\selectfont
\centering
\addtolength{\tabcolsep}{-4.5pt}
\caption{Notations.}
\begin{tabular}{|c|c|}
\hline
Notation                & Description \\ \hline
$n$                        &   Number of non-selfish clients.          \\ \hline
$m$                        &   Number of selfish clients.        \\ \hline
$h$                        &   Index of any clients.        \\ \hline
$i$                        &   Index of non-selfish clients.        \\ \hline
$j$                        &   Index of selfish clients.        \\ \hline
$\lambda$                  &   Hyperparameter in our optimization problem.     \\ \hline
$\check{\mathbf{w}}_{i}$                        &  Local model of client $i$ at the start of 
a global round.  \\ \hline
$\hat{\mathbf{w}}_i$             &   Pre-aggregation local model of client $i$.  \\ \hline
${\mathbf{w}}_i$       &   Post-aggregation local model of client $i$.          \\ \hline
$\mathbf{w}_i[k]$          &   $k$th dimension of $\mathbf{w}_i$.    \\ \hline
${\mathbf{w}}_i^{*}$      &  Optimal solution of our optimization problem.           \\ \hline
$\tilde{\mathbf{w}}_i$                      &  Model aggregated by non-selfish clients' shared models.         \\ \hline
$\overline{\mathbf{w}}_i[k]$                        & Upper bound of ${\mathbf{w}}_i[k]$.            \\ \hline
$\underline{\mathbf{w}}_i[k]$                        & Lower bound of ${\mathbf{w}}_i[k]$.             \\ \hline
$\mathbf{w}_{(h,h')}$                        &  Shared model that client $h$ sends to client $h'$.   \\ \hline
\end{tabular}
\label{tab:notation}
\vspace{-4mm}
\end{table}

\begin{table}[!t]
\centering
\caption{CNN architectures for CIFAR-10 and FEMNIST.}
\fontsize{7}{9}\selectfont
\begin{subtable}[t]{0.45\linewidth}
\centering
\caption{CIFAR-10.}
\label{table:cnn_cifar}
\begin{tabular}{|c|c|}
\hline
\textbf{Layer} & \textbf{Size} \\ \hline
Input & $32\times32\times3$ \\ \hline
Conv + ReLU & $3\times3\times30$ \\ \hline
Max Pooling & $2\times2$ \\ \hline
Conv + ReLU & $3\times3\times50$ \\ \hline
Max Pooling & $2\times2$ \\ \hline
FC + ReLU & 100 \\ \hline
FC & 10 \\ \hline
\end{tabular}
\end{subtable}
\hfill
\begin{subtable}[t]{0.45\linewidth}
\centering
\caption{FEMNIST.}
\label{table:cnn_femnist}
\begin{tabular}{|c|c|}
\hline
\textbf{Layer} & \textbf{Size} \\ \hline
Input & $28\times28\times1$ \\ \hline
Conv + ReLU & $7\times7\times32$ \\ \hline
Max Pooling & $2\times2$ \\ \hline
Conv + ReLU & $3\times3\times64$ \\ \hline
Max Pooling & $2\times2$ \\ \hline
FC & 62 \\ \hline
\end{tabular}
\end{subtable}
\vspace{-4mm}
\end{table}

\begin{table*}[t]
\centering
\begin{minipage}{0.42\linewidth}
  \renewcommand{\arraystretch}{1}
  \centering
  \fontsize{7}{9}\selectfont
  \caption{LSTM architecture for Sent140.}
  \begin{tabular}{|c|l|}
    \hline
    \textbf{Layer} & \multicolumn{1}{c|}{\textbf{Configuration}} \\ \hline
    Embedding & \begin{tabular}[c]{@{}l@{}}Word embedding: GloVe~\cite{pennington2014glove}\\ Embedding dimension: 50\end{tabular} \\ \hline
    LSTM & \begin{tabular}[c]{@{}l@{}}Input size: 50\\ Hidden size: 100\\ Number of layers: 2\end{tabular} \\ \hline
    Fully Connected & \begin{tabular}[c]{@{}l@{}}Input features: 100×2\\ Output features: 128\end{tabular} \\ \hline
    Fully Connected & \begin{tabular}[c]{@{}l@{}}Input features: 128\\ Output features: 2\end{tabular} \\ \hline
  \end{tabular}
  \label{tab:lstm}
\end{minipage}
\hfill
\begin{minipage}{0.57\linewidth}
  \renewcommand{\arraystretch}{1}
  \centering
  \fontsize{7}{9}\selectfont
  \caption{Default DFL parameter settings.}
  \begin{tabular}{|c|ccc|}
    \hline
    \textbf{Parameter} & \textbf{CIFAR-10} & \textbf{FEMNIST} & \textbf{Sent140} \\ \hline
    \# clients & \multicolumn{3}{c|}{20} \\ \hline
    \# selfish clients & \multicolumn{3}{c|}{6} \\ \hline
    \# local training epochs & \multicolumn{3}{c|}{3} \\ \hline
    Learning rate & \multicolumn{3}{c|}{$1\times10^{-4}$} \\ \hline
    Optimizer & \multicolumn{3}{c|}{Adam (weight decay=$5\times10^{-4}$)} \\ \hline
    \# global training rounds & 600 & 600 & 1,000 \\ \hline
    Batch size & 128 & 256 & 256 \\ \hline
  \end{tabular}
  \label{tab:setting}
\end{minipage}
\vspace{-4mm}
\end{table*}

\section{Dataset Description}\label{sec:append_dataset}
{\bf CIFAR-10~\cite{krizhevsky2009learning}.}
CIFAR-10 is a 10-class color image classification dataset, with 50,000 training examples and 10,000 testing examples.  
\textit{License: MIT License. Available at \url{https://www.cs.toronto.edu/~kriz/cifar.html}}.

{\bf Federated Extended MNIST (FEMNIST)~\cite{caldas2018leaf}.}
FEMNIST is a 62-class image classification dataset. It is constructed by partitioning data from Extended MNIST~\cite{lecun2010mnist} based on the writer of the digit or character.  
There are 3,550 writers and 805,263 examples in total. To distribute training examples to clients, we select 10 writers for each client and combine their data as the local training data for a client.  
\textit{License: Apache License 2.0 (via LEAF benchmark). Available at \url{https://leaf.cmu.edu}}.

{\bf Sentiment140 (Sent140)~\cite{go2009twitter}.}
Sent140 is a two-class text classification dataset for sentiment analysis. The data are tweets collected from 660,120 Twitter users and annotated based on the emoticons present in themselves.  
In our experiments, we choose 300 users with at least 10 tweets for each client, and the union of the training tweets of these 300 users is the client's local training data.  
\textit{License: Other. No explicit license is provided. Available for academic use only. Hosted on Hugging Face at \url{https://huggingface.co/datasets/stanfordnlp/sentiment140}}.

\section{Details of Aggregation Rules}\label{sec:append_rules}

{\bf FedAvg~\cite{McMahan17}.}
In FedAvg, a client aggregates the received shared models by calculating their average. 

{\bf Median~\cite{yin2018byzantine}.}
Each client in Median obtains the post-aggregation local model via taking the coordinate-wise median of shared models.

{\bf Trimmed-mean~\cite{yin2018byzantine}.}
Trimmed-mean is also a coordinate-wise method that calculates the trimmed mean of the shared models in each dimension.
Specifically, for dimension $k$, each client first removes the largest $c$ and smallest $c$ values, then computes the average of the remaining elements. 
In our experiments, we set $c=m$, where $m$ is the number of selfish clients, giving advantages to this aggregation rule.

{\bf Krum~\cite{blanchard2017machine}.}
In Krum aggregation rule, each client outputs one shared model that has the smallest sum of Euclidean distances to its closest $n-2$ neighbors.

{\bf FLTrust~\cite{cao2020fltrust}.}
In FLTrust, client $i$ computes a trust score for each received shared model in each global training round, where $0 \le i \le n+m-1$.
A shared model has a lower trust score if it deviates more from the pre-aggregation local model of client $i$. 
After that, client $i$ computes the weighted average of the received shared models, where the weights are determined by their respective trust scores. The higher the trust score, the larger the weight.

 {\bf FLDetector~\cite{zhang2022fldetector}.}
{ In FLDetector, client $i$ predicts a client's  shared model updates based on its historical information in each training round and identifies a client as selfish when the predicted model updates consistently deviate from the shared model updates calculated by the received shared model across multiple training rounds. Then client $i$ employs Median to aggregate the predicted non-selfish clients' shared models.
}

{{\bf RFA~\cite{pillutla2022robust}.} The RFA algorithm employs a robust aggregation oracle based on the geometric median. It specifically uses the smoothed Weiszfeld algorithm to iteratively compute weights for aggregating the shared models. We follow the implementation as in Pillutla et al.~\cite{pillutla2022robust}.}

{{\bf FLAME~\cite{nguyen2022flame}.} 
FLAME uses a clustering-based method to detect and eliminate bad shared models. Moreover, it uses dynamic weight-clipping {and noise injection} to further reduce the impact of bad shared models from selfish clients. {We follow the implementation as in Nguyen et al.~\cite{nguyen2022flame}.}}

\section{Simulating Non-iid Setting in DFL}\label{exp:noniid_setting}
In DFL, the clients' local training data are typically not independent and identically distributed (Non-IID). 
We randomly split all clients into 10 groups, and then use a probability to assign a training example with label $y$ to the clients in one of these 10 groups. 
Specifically, the training example with label $y$ is assigned to clients in group $y$ with a probability of $\rho$, and to clients in any other groups with the same probability of $\frac{1-\rho}{9}$, where $ \rho \in [0.1, 1.0]$. 
Within the same group, the training example is uniformly distributed among the clients. 
The parameter $\rho$  controls the degree of Non-IID.
The local training data are IID distributed when $\rho=0.1$, 
otherwise the training data are Non-IID. 
A larger value of $\rho$ implies a higher degree of Non-IID. 

\section{Attacking FLAME}\label{appendix_flame}
Recall that FLAME first clusters models based on cosine distances between them and selects the largest cluster with no fewer than $\frac{n+m}{2}+1$ models. To encourage the clustering algorithm to misclassify as many non-selfish clients as possible as selfish and subsequently remove them, we select a subset of non-selfish clients. 
We then craft shared models of selfish clients such that the crafted models are close to the shared models of these selected non-selfish clients, while keeping them distinctly different from the models of the remaining non-selfish clients.
Specifically, when client $i$ employs the FLAME aggregation rule, we first identify $\frac{n-m}{2}$ shared models of non-selfish clients with the smallest cosine distance to $\mathbf{w}_i$, denoted as $\mathbf{w}_{(r_1,i)}, ....,\mathbf{w}_{(r_{(n-m)/2}, i)}$. Then, we craft the shared model of selfish clients as follows:
\begin{equation}
    \begin{aligned}
        &\mathbf{w}_{(n,i)}=\cdots=\mathbf{w}_{(n+m-1,i)}= \\
        &\frac{1}{\frac{n-m}{2}+\alpha-\beta n}(\sum_{h=1}^{(n-m)/2}\mathbf{w}_{(r_h,i)}+\alpha\mathbf{w}_{i}-\beta\sum_{j=0}^{n-1}\mathbf{w}_{(j,i)}),
    \end{aligned}
    \label{equ:attack_flame}
\end{equation}
where $\alpha>0$ and $\beta>0$ are some constants. In our experiments, we set $\alpha=5$ and $\beta=0.01$. 
In Equation~\ref{equ:attack_flame}, the first term aims to bring the crafted models closer to the selected $\frac{n-m}{2}$ shared models; the second term satisfies our utility goal, which is to maintain the proximity of client $i$'s post-aggregation local model to its pre-aggregation local model; and the third term pushes the crafted models away from the unselected models, which satisfies our competitive advantage goal. 
The selfish clients  use the Median aggregation rule to aggregate received models from all clients. 

It is noteworthy that although we do not devise specific attack strategies for FLAME's noise injection,  the attack outlined in Equation~\ref{equ:attack_flame} still enables selfish clients to gain a significant competitive advantage.

\section{Other Ablation Studies}\label{sec:append_ablation}
\begin{figure*}[!t]
 \centering
 \subfloat[Impact of $\epsilon$ \label{fig:epsilon}]{\includegraphics[width=0.3 \textwidth]{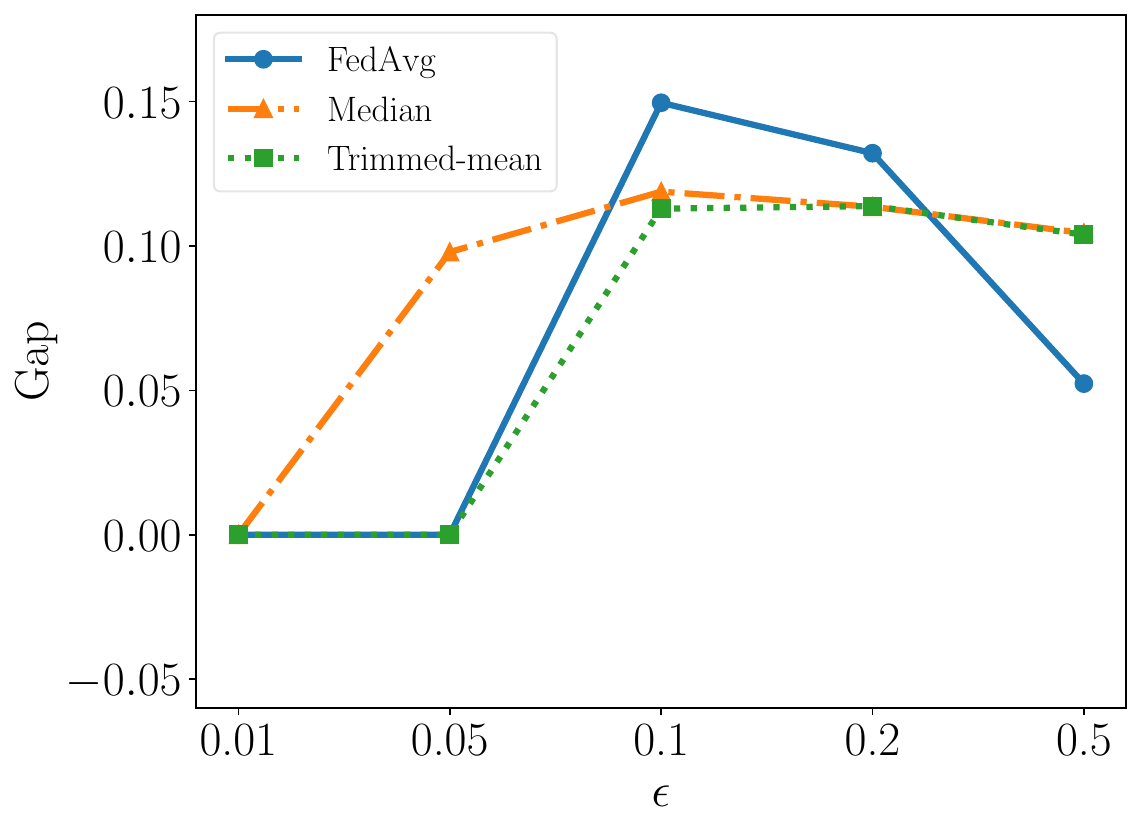}}
 \subfloat[Impact of $I$ \label{fig:interval}]{\includegraphics[width=0.3 \textwidth]{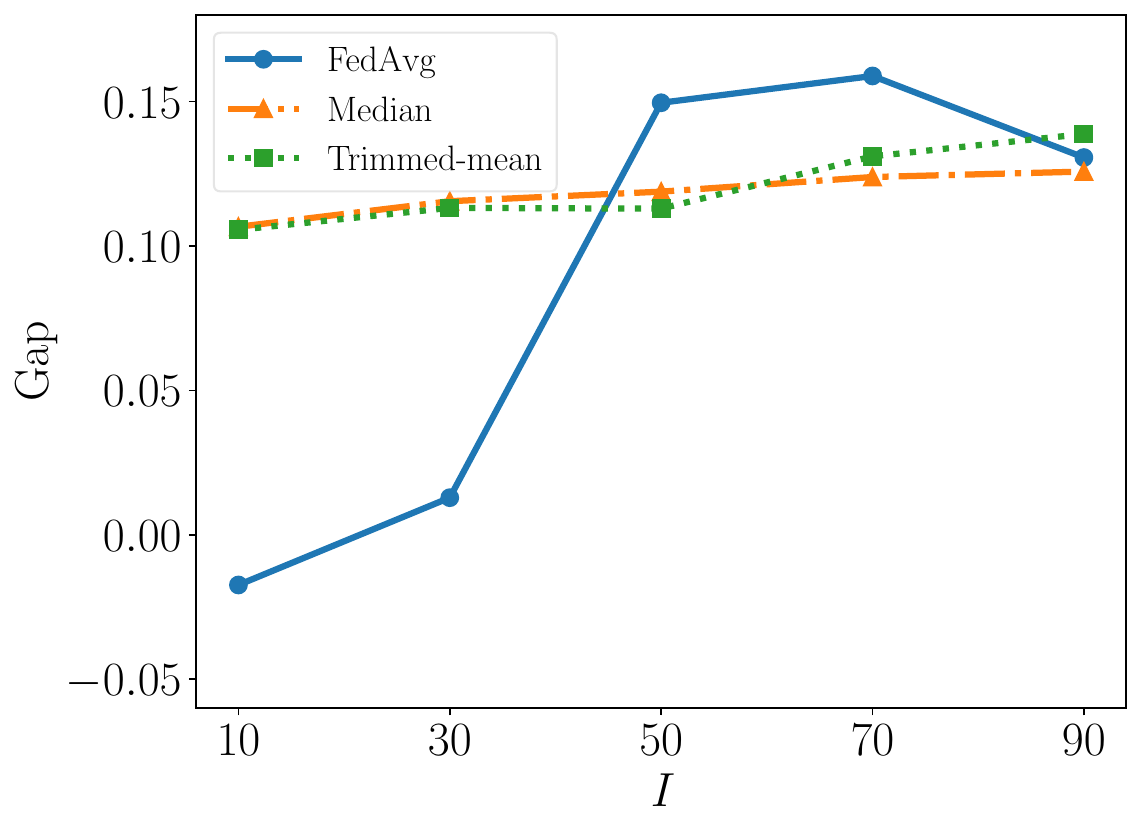}}
 \subfloat[Impact of number of clients \label{fig:nwokers}]{\includegraphics[width=0.3 \textwidth]{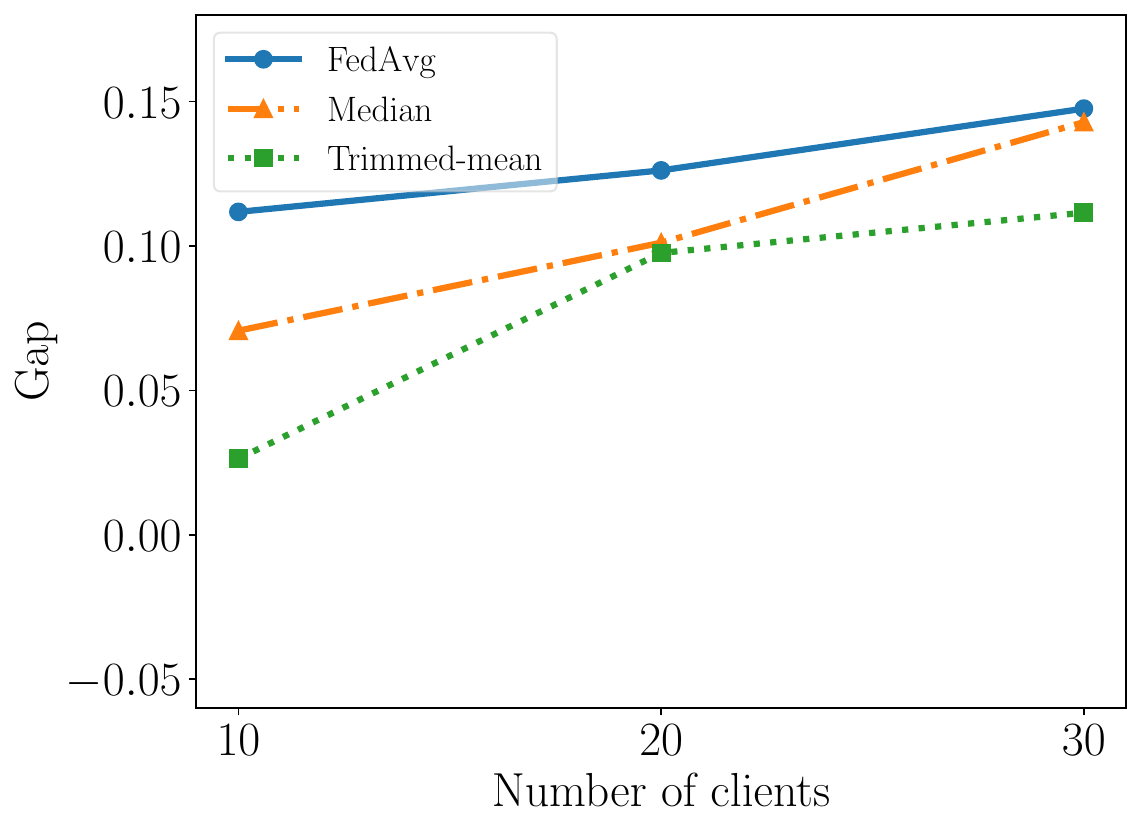}}\\
  \caption{Impact of  $\epsilon$, $I$, and total number of clients on Gap of \alg{}.}
 \label{fig:diffparam}
\end{figure*}

\begin{table*}[!t]\renewcommand{\arraystretch}{1.0}
\setlength{\tabcolsep}{4.5pt}
\centering
\fontsize{8.5}{9}\selectfont
\caption{Results when non-selfish clients and selfish clients use different aggregation rules. Each column represents the aggregation rule used by selfish clients, and each row denotes the rule used by non-selfish clients.}
\begin{tabular}{lccccccccc}
\toprule
\multirow{2}{*}{\textbf{Aggregation Rule}} & \multicolumn{3}{c}{\textbf{FedAvg (selfish)}} & \multicolumn{3}{c}{\textbf{Median (selfish)}} & \multicolumn{3}{c}{\textbf{Trimmed-mean (selfish)}} \\
\cmidrule(lr){2-4}\cmidrule(lr){5-7}\cmidrule(lr){8-10}
 & \textbf{MTAS} & \textbf{MTANS} & \textbf{Gap} & \textbf{MTAS} & \textbf{MTANS} & \textbf{Gap} & \textbf{MTAS} & \textbf{MTANS} & \textbf{Gap} \\
\midrule
FedAvg        & 0.475 & 0.326 & \textbf{0.150} & 0.478 & 0.326 & \textbf{0.152} & 0.470 & 0.326 & \textbf{0.144} \\
Median        & 0.574 & 0.425 & \textbf{0.150} & 0.543 & 0.425 & 0.119 & 0.301 & 0.425 & $-0.123$ \\
Trimmed-mean  & 0.597 & 0.484 & 0.113 & 0.596 & 0.484 & 0.113 & 0.597 & 0.484 & 0.113 \\
\bottomrule
\end{tabular}
\vspace{-4mm}
\label{tab:diffagg}
\end{table*}

\myparatight{Impact of  $\epsilon$ and $I$}\label{exp:start_attack}
Figs.~\ref{fig:epsilon} and~\ref{fig:interval} show the impact of $\epsilon$ and $I$, respectively.
The larger the $\epsilon$ and the smaller the $I$, the earlier the selfish clients start to attack.
We observe that our \alg{} is more sensitive to these two parameters when attacking FedAvg compared to Median and Trimmed-mean.
From Fig.~\ref{fig:epsilon}, we observe that when $\epsilon$ does not exceed $0.05$, our attack yields a low Gap, especially when the aggregation rule is FedAvg or Trimmed-mean, where the Gap is 0. In fact, in this case, the selfish clients do not start to attack until the end of the training process. 
When $\epsilon$ ranges from $0.1$ to $0.2$, our attack exhibits relatively stable performance. When $\epsilon$ is $0.5$ and the aggregation rule is FedAvg, selfish clients initiate the attack before the local models have almost converged, resulting in a decline in the attack effectiveness. 
We also observe from Fig.~\ref{fig:interval} that when attacking Median and Trimmed-mean, the performance of \alg{} is almost not affected by $I$, and the Gap obtained by our \alg{} is greater than $10\%$ in all cases. In contrast, when attacking FedAvg, the value of $I$ has a notable impact on Gap: when $I$ is not greater than $30$, the Gap is close to 0; and  when $I$ is not less than $50$, the Gap is larger than $13\%$. 

\myparatight{Impact of total number of clients} To explore the impact of the number of clients on \alg{}, we first divide the training data of CIFAR-10 dataset among $30$ clients, and then randomly select subsets of $10$, $20$, and $30$ clients from them while maintaining a fixed fraction of selfish clients at $30\%$. 
Fig.~\ref{fig:nwokers} shows the impact of the total number of clients on \alg{}. 
We observe that as the number of clients increases,  our attack is more effective.  
For instance, when the aggregation rule is Median, increasing the number of clients from $20$ to $30$ results in a $4.2\%$ improvement in Gap. This may be because that although the fraction of selfish clients remains unchanged, the absolute number of selfish clients increases, thereby making the attack more effective.

\myparatight{Impact of aggregation rule used by selfish clients} 
Selfish clients can use different aggregation rules from non-selfish clients.  
Table~\ref{tab:diffagg} presents the results of \alg{} under varying combinations of aggregation rules used by selfish and non-selfish clients. For each aggregation rule used by non-selfish clients, we apply the corresponding version of \alg{}.
For instance, when non-selfish clients use Trimmed-mean, selfish clients craft their shared models using the Trimmed-mean based version of \alg{}.
We observe that the Gap exceeds $11\%$ in most cases, with only one exception: when non-selfish clients use Median and selfish clients use Trimmed-mean. Overall, regardless of the aggregation rule employed by non-selfish clients, it is consistently a better choice for selfish clients to use FedAvg.

\section{Discussion and Limitations}\label{sec:discussion}
{\myparatight{Utility preserving of \alg{}} \alg{}'s utility goal requires selfish clients to learn more accurate local models than they would by training only with other selfish clients. According to Table~\ref{main_table_results}, \alg{}'s MTAS usually surpasses that of Two Coalitions, achieving this goal. Although sometimes selfish clients’ model performance declines, we argue that this decrease is acceptable for two reasons. First, while MTAS shows a noticeable drop in some cases (e.g., FedAvg and Median on CIFAR-10) compared to No attack, the difference is usually within 6\%. This drop could be attributed to the high degree of non-iid data in our CIFAR-10 setup. Notably, in some cases (e.g., FedAvg and Trimmed-mean on Sent140), \alg{}'s MTAS even exceeds No attack. Second, this performance drop is compensated by a greater competitive advantage, as the Gap is consistently larger than the MTAS decline, meaning selfish clients sacrifice some model performance for more competitive gain.}

\myparatight{Defending against \alg{}} Byzantine-robust aggregation rules are defenses against ``outlier'' shared models, which can be applied to defend against \alg{}. However, we theoretically show that  Median and Trimmed-mean cannot defend against our \alg{}, as the selfish clients can derive the optimal shared models to minimize a weighted sum of the two loss terms that quantify the two attack goals respectively. Moreover, we empirically show that other more advanced aggregation rules such as Krum, FLTrust, FLDetector, RFA, and FLAME cannot defend against \alg{} as well. It is an interesting future work to explore new defense mechanisms to defend against \alg{}. In \alg{}, a selfish client sends different shared models to different non-selfish clients. Therefore, one possible direction is to explore cryptographic techniques to enforce that a selfish client sends the same shared model to all clients. We expect such defense can reduce the attack effectiveness, though may not completely eliminate the attack since a selfish client can still send the same carefully crafted shared model to all non-selfish clients. Another direction for defense is to use trusted execution environment, e.g., NVIDIA H100 GPU. In particular, the local training and model sharing of each client is performed in a  trusted  execution environment, whose remote attestation capability enables a non-selfish client to verify that the shared models from other clients are genuine. 

\myparatight{Approximating with previous round models}
A potential concern of our formulation is the information asymmetry in the model-sharing process.
Specifically, the algorithm assumes that at each communication round, selfish clients can first obtain the shared models from all non-selfish clients before sending out their own models, which may be less realistic in practical decentralized or asynchronous federated learning systems. To examine whether this asymmetry affects the attack’s effectiveness, we conduct an additional experiment where each selfish client instead uses the models received from each non-selfish client in the previous round as estimates of their current round models when constructing its selfish update. 
Table \ref{tab:asymmetry} reports the results on CIFAR-10. The results are very close to those obtained under the original assumption where selfish clients access the current round models.
This indicates that the attack remains effective even when selfish clients rely on approximate, delayed information, suggesting that the method is robust to communication asymmetry and applicable to more realistic settings.

\begin{table}[!t]\renewcommand{\arraystretch}{1.0}
\vspace{-3mm}
\renewcommand{\arraystretch}{1}
\setlength{\tabcolsep}{5pt}
\centering
\fontsize{8.5}{9}\selectfont
\caption{Performance when selfish clients use previous round models from non-selfish clients.}
\begin{tabular}{lccc}
\toprule
\textbf{Aggregation Rule} & \textbf{MTAS} & \textbf{MTANS} & \textbf{Gap} \\
\midrule
FedAvg        & 0.475 & 0.327 & 0.148 \\
Median     & 0.545 & 0.427 & 0.118 \\
Trimmed-mean  & 0.594 & 0.483 & 0.111 \\
\bottomrule
\end{tabular}
\vspace{-2mm}
\label{tab:asymmetry}
\end{table}

\myparatight{Partial client participation}
In practical FL systems, only a subset of clients may be active in each communication round due to resource or connectivity constraints. 
To examine the impact of such partial participation, we conduct an experiment where only 50\% of the clients participate in each round, while keeping all other parameters unchanged. 
As shown in Table~\ref{tab:partial_participation}, our method continues to exhibit a strong competitive advantage under this setting. 
We further observe that using only half of the clients delays the manifestation of selfish behavior by about 20–30 rounds compared to using all clients, likely due to the slower convergence of the system when fewer participants are involved.

\myparatight{Partially connected topology}
We further examine the robustness of \alg under a decentralized setting where the client network is not fully connected. 
Specifically, we randomly generate a communication graph in which each client connects to only 10 other clients (out of 20 in total). 
In this configuration, selfish clients do not have access to the full connectivity structure of the system. That is, they are unaware of which non-selfish clients are interconnected, and thus cannot directly apply the attack strategies proposed for the fully connected case. 
To approximate this scenario, we design an attack based on the local models of all non-selfish clients that are accessible to each selfish client, and apply the same attack strategy originally designed for FedAvg to all three aggregation rules (FedAvg, Median, and Trimmed-mean). 
As shown in Table~\ref{tab:partial_graph}, \alg still achieves a clear competitive advantage across all settings, indicating that the attack strategy generalizes effectively even when the system connectivity is partial.

\begin{table}[!t]\renewcommand{\arraystretch}{1.0}
\vspace{-3mm}
\setlength{\tabcolsep}{5pt}
\centering
\fontsize{8.5}{9}\selectfont
\caption{Results when only 50\% of clients participate in each communication round.}
\begin{tabular}{lccc}
\toprule
\textbf{Aggregation Rule} & \textbf{MTAS} & \textbf{MTANS} & \textbf{Gap} \\
\midrule
FedAvg        & 0.502 & 0.353 & 0.149 \\
Median        & 0.583 & 0.465 & 0.118 \\
Trimmed-mean  & 0.596 & 0.506 & 0.090 \\
\bottomrule
\end{tabular}
\vspace{-2mm}
\label{tab:partial_participation}
\end{table}

\begin{table}[!t]\renewcommand{\arraystretch}{1.0}
\vspace{-3mm}
\setlength{\tabcolsep}{5pt}
\centering
\fontsize{8.5}{9}\selectfont
\caption{Results under a partially connected client graph, where each client connects to 10 others.}
\begin{tabular}{lccc}
\toprule
\textbf{Aggregation Rule} & \textbf{MTAS} & \textbf{MTANS} & \textbf{Gap} \\
\midrule
FedAvg        & 0.602 & 0.569 & 0.073 \\
Median        & 0.505 & 0.388 & 0.117 \\
Trimmed-mean  & 0.589 & 0.503 & 0.086 \\
\bottomrule
\end{tabular}
\vspace{-2mm}
\label{tab:partial_graph}
\end{table}

\section{Broader Impact}\label{sec:broader_impact}
Our work reveals a new type of insider threat in DFL systems, where a subset of selfish clients can manipulate the training process to gain a competitive advantage without disrupting overall model performance. By exposing this vulnerability, our research highlights the risks of trusting all participants in collaborative learning without strong assumptions or safeguards.

We believe this work will have a positive societal impact by raising awareness of subtle collusion threats in DFL and encouraging the development of more robust aggregation rules and detection mechanisms. At the same time, we acknowledge that malicious actors could potentially misuse our insights to harm collaborative learning systems in sensitive domains, such as healthcare or finance. To mitigate such risks, in Appendix~\ref{sec:discussion}, we also discuss possible defenses against \alg{}. By highlighting both the threat and defense directions, we aim to facilitate the development of more secure and trustworthy decentralized learning systems.

\end{document}